\documentclass{article}

\usepackage{microtype}
\usepackage{graphicx}
\usepackage{subcaption}
\usepackage{booktabs} %

\usepackage[colorlinks=true, allcolors=blue]{hyperref}

\usepackage[accepted]{icml2026none}

\icmltitlerunning{Anti-causal domain generalization: Leveraging unlabeled data}

\usepackage{amsmath, mathtools}
\usepackage{centernot}
\usepackage{amssymb}
\usepackage{statmath}
\usepackage{graphicx}
\usepackage{amsthm}
\usepackage{bbm}
\usepackage{tabu}
\usepackage{dsfont}
\usepackage{paralist}
\usepackage[inline]{enumitem}
\setlist{topsep=-1pt, itemsep=-1pt}
\usepackage{minitoc}
\usepackage[title,toc,titletoc,page]{appendix}
\usepackage{algorithm,algorithmic}

\renewcommand{\algorithmicrequire}{\textbf{Input:}}
\renewcommand{\algorithmicensure}{\textbf{Output:}}

\usepackage{tikz}
\usepackage{tkz-graph}
\usetikzlibrary{shapes.geometric}
\usetikzlibrary{backgrounds}
\usetikzlibrary{arrows.meta}

\makeatletter

\newcommand{\inlineitem}[1][]{%
\ifnum\enit@type=\tw@
    {\descriptionlabel{#1}}
  \hspace{\labelsep}%
\else
  \ifnum\enit@type=\z@
       \refstepcounter{\@listctr}\fi
    \quad\@itemlabel\hspace{\labelsep}%
\fi}
\usepackage{physics}
\usepackage{caption}
\usepackage{accents}
\usepackage[english]{babel}
\usepackage{aligned-overset}
\usepackage{amsfonts}
\usepackage{booktabs}
\usepackage{tabu}
\usepackage[T1]{fontenc}
\usepackage{mathrsfs}
\usepackage{multirow}
\usepackage{stmaryrd}
\usepackage{tikz}
\usepackage[colorlinks=true, allcolors=blue]{hyperref}

\usepackage{subcaption}

\theoremstyle{plain}
\newtheorem{theorem}{Theorem}[section]
\newtheorem{setting}[theorem]{Setting}
\newtheorem{proposition}[theorem]{Proposition}
\newtheorem{lemma}[theorem]{Lemma}

\theoremstyle{definition}

\theoremstyle{remark}
\newtheorem{remark}[theorem]{Remark}

\theoremstyle{definition}

\newcommand{\tst}{\operatorname{tst}}

\let\argmin\relax

\DeclareMathOperator*{\argmin}{argmin}

\newcommand{\ci}{\mathrel{\perp\mspace{-10mu}\perp}}
\newcommand\numberthis{\addtocounter{equation}{1}\tag{\theequation}}

\DeclareMathOperator{\EX}{\mathbb{E}}%

\newcommand{\Var}{\mathrm{Var}}

\renewcommand{\P}{\mathbb{P}}
\newcommand{\R}{\mathbb{R}}

 \def \calE {\mathcal E}

 \def \calF {\mathcal F}

 \def \calL {\mathcal L}
 
 \def \calN {\mathcal N}

 \def \calS {\mathcal S}

\begin{document}

\twocolumn[
  \icmltitle{Anti-causal domain generalization: Leveraging unlabeled data}

\icmlnoneaffiliation{noaffil}

  \begin{icmlauthorlist}
    \icmlauthor{Sorawit Saengkyongam}{aapl}
    \icmlauthor{Juan L. Gamella}{noaffil}
    \icmlauthor{Andrew C.~Miller}{aapl}
    \icmlauthor{Jonas Peters}{eth}
    \icmlauthor{Nicolai Meinshausen}{eth}
    \icmlauthor{Christina Heinze-Deml}{aapl}
  \end{icmlauthorlist}

  \icmlaffiliation{aapl}{Apple}
  \icmlaffiliation{eth}{ETH Zürich}
  \icmlaffiliation{noaffil}{none}

  \icmlcorrespondingauthor{Sorawit Saengkyongam}{s.saengkyongam@apple.com}

  \icmlkeywords{Machine Learning, Causality, Invariance, Domain Generalization, Distributional Shifts}

  \vskip 0.3in
]

\printAffiliationsAndNotice{}  %

\begin{abstract}
The problem of domain generalization concerns learning predictive models that are robust to distribution shifts when deployed in new, previously unseen environments. Existing methods typically require labeled data from multiple training environments, limiting their applicability when labeled data are scarce. In this work, we study domain generalization in an anti-causal setting, where the outcome causes the observed covariates. Under this structure, environment perturbations that affect the covariates do not propagate to the outcome, which motivates regularizing the model's sensitivity to these perturbations. Crucially, estimating these perturbation directions does not require labels, enabling us to leverage unlabeled data from multiple environments. We propose two methods that penalize the model's sensitivity to variations in the mean and covariance of the covariates across environments, respectively, and prove that these methods have worst-case optimality guarantees under certain classes of environments. Finally, we demonstrate the empirical performance of our approach on a controlled physical system and a physiological signal dataset.
\end{abstract}

\section{Introduction}
Machine learning models are often trained on data from a limited set of environments and subsequently deployed in new, previously unseen environments.
A central challenge in this setting is domain generalization: learning predictive models that perform well not only on the training environments but also on novel test environments that may differ from those seen during training \citep{blanchard2011generalizing, muandet2013domain}. This challenge is particularly acute in high-stakes applications, such as healthcare, where distribution shifts between hospitals, patient populations, or measurement devices can significantly impact model performance \citep{subbaswamy2020development, degrave2021ai}.

One prominent approach to domain generalization leverages the framework of structural causal models \citep[SCMs;][]{Pearl2009, peters2016causal} to characterize the mechanisms by which distributions shift across environments. Under this framework, one can identify predictors that remain invariant, and hence robust, across a class of environments implied by the underlying structural causal model. This perspective has led to a variety of methods that exploit invariance for robust prediction \citep{ rojas2018invariant, magliacane2018domain, Arjovsky2019, heinze2021conditional, rothenhausler2021anchor, pfister2021stabilizing, saengkyongam2022exploiting, shen2025causality}. However, existing methods typically require labeled data from multiple environments to estimate invariance properties, which limits their applicability when labeled data are scarce or expensive to obtain. 

In many practical settings, unlabeled data are more abundant than labeled data. 
This motivates studying domain generalization in a type of semi-supervised setting: learning robust models using labeled data from only a small number of environments while leveraging unlabeled data from many others. 
A key question we address is under which assumptions unlabeled data can provide useful information about the structure of distribution shifts, even without outcome labels. This work demonstrates that this is possible
under an anti-causal learning setting, where the outcome $Y$ causes the observed predictors $X$. 
Anti-causal structures arise naturally in many applications. In healthcare, a patient's underlying physiological state (the outcome) often causes the observable measurements (the predictors). 
In speech recognition, the spoken content (the outcome) causes the observed audio signal (the predictors).

In an anti-causal setting, environment perturbations that only affect the predictors $X$ do not propagate to the outcome $Y$. Nevertheless, such perturbations can induce both covariate and concept shifts (see Section~\ref{sec:formulation}), harming predictive performance in unseen environments. 
To mitigate these distributional shifts, we develop regularization strategies that penalize sensitivity to environment perturbations.  
Crucially, the perturbation directions can be estimated solely from the marginal distribution of $X$, without requiring labels.
We provide theoretical guarantees showing that our regularized estimators are optimal for worst-case risk over a class of environments characterized by the directions of variation in the unlabeled data,
with the degree of extrapolation controlled by the regularization strength.

\paragraph{Contributions.} Our main contributions are as follows. 

\begin{enumerate}
    \item We formalize an anti-causal domain generalization framework in a semi-supervised setting, where labeled data are available from a few environments, and unlabeled data are available from many others (Section~\ref{sec:formulation}).
    \item We propose two regularization strategies, Mean-based Invariant Regularization (MIR) and Variance-based Invariant Regularization (VIR), that exploit distributional variations in the unlabeled data to encourage robustness. We provide theoretical guarantees, showing that the regularized estimators are optimal in terms of worst-case risk over certain classes of environments (Section~\ref{sec:method}).
    \item We evaluate our methods on two real-world datasets that naturally exhibit an anti-causal structure: a controlled physical system and a physiological signal dataset (Section~\ref{sec:experiment}).
\end{enumerate}

\section{Related Work}\label{sec:related_work}

Our work builds on the literature that exploits invariance in heterogeneous data to address domain generalization; 
see \citet{buhlmann2020invariance} for an overview. The concept of invariant prediction \citep{peters2016causal} has been employed to identify robust (or stable) predictors across various settings \citep{rojas2018invariant, HeinzeDeml2017, magliacane2018domain, pfister2021stabilizing, heinze2021conditional, saengkyongam2021invariant, saengkyongam2024effect}. Several works have extended this idea from selecting invariant predictors to incorporating invariance as a regularization term \citep{Arjovsky2019, heinze2021conditional, rothenhausler2021anchor, saengkyongam2022exploiting, shen2025causality}. These methods leverage causal frameworks to establish theoretical guarantees for distributional robustness within certain classes of distributions implied by SCMs.

To the best of our knowledge, none of the aforementioned works considers extracting invariance from unlabeled data. Our work utilizes assumptions from anti-causal structures, which allow us to define an invariance regularization that depends solely on unlabeled observations.
While \citet{scholkopf2012causal} discuss the implications of anti-causal learning in a single-domain semi-supervised setting, we extend these ideas to semi-supervised domain generalization. Anti-causal structures were also considered in \citet{heinze2021conditional,  veitch2021counterfactual, pmlr-v151-makar22a, jiang2022invariant, eastwood2023spuriosity} for domain generalization, but not within a semi-supervised setting.
Our theoretical guarantees are closely related to those in \citet{rothenhausler2021anchor} and \citet{shen2025causality}; however, ours differ by explicitly leveraging anti-causal assumptions to derive guarantees from invariance regularization that uses only unlabeled data.

Another line of research focuses on explicitly optimizing worst-case group performance \citep{MeinshausenMiniMax, buhlmann2015magging,  hu2018does, sagawa2020distributionally, freni2025maximum}, 
a framework often known as Group Distributionally Robust Optimization (GroupDRO). In GroupDRO, robustness is typically defined with respect to test distributions that are convex combinations of training distributions, which limits the model's ability to extrapolate beyond the training support. In contrast, our method considers test distributions that may lie outside the convex hull of the training distributions. Specifically, we define plausible directions of extrapolation based on the directions of variation observed in the unlabeled data. While \citet{krueger2021out} consider extrapolation beyond convex combinations by optimizing the worst-case affine combination of training risks, the directions along which the model can extrapolate from the training distributions are, to our knowledge, less well-characterized. 

\section{Anti-causal domain generalization in a semi-supervised setting}\label{sec:formulation}

This section formalizes the setting and goal of our work.
\begin{setting}[Multi-environment anti-causal learning]\label{setting:anti-causal}
Let $\calE$ be a collection of environments, we consider the following class of structural causal models indexed by $e \in \calE$: \vspace{0.3cm} \\
\begin{minipage}{0.49\columnwidth}
\begin{equation}\label{eq:scm_e}
\calS(e):
\begin{cases}
U \coloneqq \varepsilon_U \\
Y \coloneqq g_0(U, \varepsilon_Y) \\
X \coloneqq f_0(Y, U, \varepsilon_X) + \varepsilon_e,
\end{cases}
\end{equation}
\end{minipage}%
\quad
\begin{minipage}{0.45\columnwidth}
\begin{tikzpicture}[scale=1.0, node distance=1.0cm, roundnode/.style={circle, draw, inner sep=3pt,minimum size=7mm}, squarenode/.style={rectangle, draw, inner sep=1pt, minimum size=6mm}] 
  \node[squarenode] (E) at (-2, 0.8){$e$};
  \node[roundnode] (X) at (-1, 0){$X$};
  \node[roundnode] (Y) at (1, 0){$Y$};
  \node[roundnode][fill=black!25] (U) at (0, 0.8){$U$};
  \tikzstyle{EdgeStyle}=[line width=1, -Latex]
  \Edge[label=$f_0$](Y)(X)
  \Edge[labelstyle={pos=0.42}](E)(X)
  \tikzstyle{EdgeStyle}=[bend left=20, line width=1, -Latex, dashed]
  \Edge[](U)(Y)
  \tikzstyle{EdgeStyle}=[bend right=20, line width=1, -Latex, dashed]
  \Edge[](U)(X)
\end{tikzpicture}
\end{minipage} \vspace{0.2cm}\\
where $X \in \R^d$ denotes the predictors, $U \in \R^q$ denotes the unobserved confounders, $Y \in \R$ denotes the outcome, $(\varepsilon_U, \varepsilon_Y, \varepsilon_X)$ are jointly independent noise variables, and for each $e \in \calE$, the environment perturbation $\varepsilon_e \sim \P^e_{\varepsilon}$ satisfies $\varepsilon_e \ci (\varepsilon_U, \varepsilon_Y, \varepsilon_X)$. Without loss of generality, we assume that $f_0(Y, U, \varepsilon_X)$ and $g_0(U, \varepsilon_Y)$ have zero mean.
\end{setting}

\begin{remark}\label{remark:setting}
\begin{enumerate}[label=(\roman*)]
    \item We do not impose structural minimality on the SCM~\eqref{eq:scm_e}: $f_0$ may or may not depend on $Y$. Hence, the SCM~\eqref{eq:scm_e} subsumes the case where the $X$--$Y$ dependence is driven solely by the latent state $U$ without a direct edge from $Y$ to $X$. For this reason, we adopt a broader notion of ``anti-causal'' to refer to settings in which the outcome $Y$ is not a descendant of the predictors $X$ (as opposed to $Y$ being the direct cause of $X$).
    \item Although the SCM~\eqref{eq:scm_e} imposes certain restrictions, such as the absence of a direct effect from $e$ to $Y$, this model class can still capture non-trivial distribution shifts. In particular, it allows for both covariate shift and concept shift simultaneously: for some $e_1, e_2 \in \calE$, we may have both $\P^{e_1}_X \neq \P^{e_2}_X$ and $\P^{e_1}_{Y \mid X} \neq \P^{e_2}_{Y \mid X}$.
\end{enumerate}
\end{remark}

\paragraph{Notation.} For each environment $e \in \calE$, we denote by $\P_{X,Y}^e$ the distribution induced by $\calS(e)$, and by $\EX^e$ and $\Var^e$ the corresponding expectation and variance.

\subsection{Learning setting}\label{sec:learning_setting}

We are given observations from $p$ training environments $\calE^{\tr} \coloneqq \{e_1, \dots, e_p\}\subseteq \calE$, where we have 
\begin{itemize}
    \item Labeled data from a small set of environments: $D^{\tr}_{X,Y} = \{(X_i, Y_i, E_i)\}_{i=1}^{k}$ such that $(X_i, Y_i) \sim \P^{E_i}_{X,Y}$, where $E_i \in \calL \subset \calE^{\tr}$, 
    with $\calL$ denoting the set of labeled environments, and $k$ is the total number of labeled observations. We assume $\{E_1, \ldots, E_k\} = \calL$.
    \item Unlabeled data from all training environments: 
    $D^{\tr}_X = \{(X_j, E_j)\}_{j=1}^{n}$ such that $X_j \sim \P^{E_j}_{X}$, where $E_j \in \calE^{\tr}$ 
    and $n$ is the total number of unlabeled observations, with $n \geq k$ (the first $k$ observations are the $X_i$ and $E_i$ in $D^{\tr}_{X,Y}$).
    We assume\footnote{This assumption ensures that we observe at least one observation from each environment in $\calE^{\tr}$.} $\{E_1, \ldots, E_n\} = \calE^{\tr}$.
\end{itemize}

Let $\calF$ be a function class of interest. Our goal is to learn a predictive function $f^{\tr} \in \calF$ using $D^{\tr}_{X,Y}$ and $D^{\tr}_X$ such that it generalizes well to an unseen test environment $e_{\tst} \in \calE$. Since we do not have additional information on the test environment $e_{\tst}$, it can be any environment in $\calE$, so possibly $e_{\tst} \notin \calE^{\tr}$. We consider the worst-case risk as the objective:
\begin{equation}\label{eq:robust_objective}
    \sup_{e \in \calE} \EX^e[\ell(f^{\tr}(X), Y)],
\end{equation}
where $\ell: \R \times \R \rightarrow \R$ is a given loss function.

As discussed in Section~\ref{sec:related_work}, the objective~\eqref{eq:robust_objective} depends crucially on the class of environments $\calE$ under consideration. 
Our approach characterizes the class $\calE$ using the directions of variation observed in the unlabeled data. 
This allows the model to extrapolate along such directions, with the degree of extrapolation controlled by a regularization parameter. We now introduce our approach and formalize these ideas.

\section{Data-driven robust regression}\label{sec:method}

The key idea of our approach is to leverage the invariance encoded in the assumed SCM~\eqref{eq:scm_e}: the environment perturbations $\varepsilon_e$ are independent of the outcome $Y$, i.e.,
\begin{equation}\label{eq:invariance}
    \forall e \in \mathcal{E}: \varepsilon_e \perp\!\!\!\perp Y.
\end{equation}
Intuitively, if we can characterize the structure of the variations in $\varepsilon_e$ across environments, we can penalize models along these directions, since (i) under Setting~\ref{setting:anti-causal}\footnote{Setting~\ref{setting:anti-causal} can be relaxed to allow the environment $E$ to influence $Y$ (or $U$) through an observed covariates $W$. Concretely, suppose we additionally observe $W$ such that $W$ blocks all paths from $E$ to $Y$. Then the invariance \eqref{eq:invariance} still holds conditionally on $W$, and the methods proposed in Section~\ref{sec:method} can be extended. We discuss this extension in more detail in Appendix~\ref{app:mediator_extension}.}, such variations are the  source of distributional shifts across environments, and (ii), by~\eqref{eq:invariance}, these variations carry no information about the outcome.
Although $\varepsilon_e$ is not directly observable, we show in Theorems~\ref{thm:MIR}~and~\ref{thm:VIR} that it is possible to learn provably robust models by exploiting distributional shifts estimated from unlabeled data. To this end, we develop regularization strategies that penalize models according to their sensitivity to these inferred shifts to promote robustness to environment shifts.

In this section, we focus on the setting where $\calF$ consists of linear functions and the loss $\ell$ is the squared error. However, these are not fundamental limitations; we discuss extensions to other loss functions and nonlinear models in Section~\ref{sec:extension}.

For the remainder of this section, let $\calE$ be a collection of environments such that $\{\P^e_{\varepsilon}\}_{e \in \calE}$ contains all probability distributions over $\R^d$. We define $\EX^{\tr}[\cdot] \coloneqq \frac{1}{|\calL|} \sum_{e \in \calL} \EX^{e}[\cdot]$ as the average expectation over labeled training environments.

\subsection{Mean-based Invariant Regularization (MIR)}\label{sec:MIR}

Our first regularization strategy targets variations in the means of $\varepsilon_e$ across environments. Define the matrix of environment-specific covariate means as
\begin{equation*}
    K \coloneqq \begin{bmatrix} \EX^{e_1}[X] & \cdots & \EX^{e_p}[X] \end{bmatrix} \in \R^{d \times p},
\end{equation*}
which can be estimated solely from the unlabeled data. We consider the following regularized regression coefficients:
\begin{equation}\label{eq:MIR_objective}
    \beta^{\text{MIR}}_{\gamma} = \argmin_{\beta \in \R^d} \EX^{\tr}[(Y - \beta^\top X)^2] + \gamma \beta^\top \Var(K) \beta,
\end{equation}
where $\Var(K) \coloneqq \frac{1}{p} K K^\top - \big(\frac{1}{p} K \mathbf{1}_p\big)\big(\frac{1}{p} K \mathbf{1}_p\big)^\top$ denotes the covariance matrix of the environment means and $\mathbf{1}_p$ is the vector of ones.

In Setting~\ref{setting:anti-causal}, $f_0(Y, U, \varepsilon_X)$ has zero mean, and hence $\EX^{e}[X] = \EX^e[\varepsilon_e]$ for all environments $e \in \calE$. The matrix $K$ therefore equals 
\begin{equation*}
    K = \mu^{\tr}_{\varepsilon} \coloneqq \begin{bmatrix} \EX[\varepsilon_{e_1}] & \cdots & \EX[\varepsilon_{e_p}] \end{bmatrix}.
\end{equation*}
Thus, the MIR regularization term $\beta^\top \Var(K) \beta$ penalizes sensitivity to mean shifts in environment perturbations $\varepsilon_e$.

In Theorem~\ref{thm:MIR} below, we establish a dual characterization of~\eqref{eq:MIR_objective} in terms of distributional robustness: the model coefficients that minimize the regularized objective~\eqref{eq:MIR_objective} are also optimal for the worst-case risk under a class of mean-shift perturbations, with the size of this class controlled by the regularization parameter $\gamma$. (All proofs are in Appendix~\ref{app:proofs}.)

\begin{theorem}[MIR Robustness]\label{thm:MIR}
Define
\[
\Delta^{\diamond}_{\gamma} \coloneqq \left\{ A \in \R^{d \times d} : 0 \preceq A \preceq \gamma \Var(\mu^{\tr}_{\varepsilon}) \right\}.
\]
Under Setting~\ref{setting:anti-causal}, we have
\begin{align*}
    &\beta^{\emph{MIR}}_{\gamma}
    \in \argmin_{\beta \in \R^d} \sup_{e \in \calE^{\diamond}_{\gamma}} \EX^{e}[(Y - \beta^\top X)^2], \text{ where} \\
     \calE^{\diamond}_{\gamma} &\coloneqq \{e_{\tst} \in \calE : \exists A \in \Delta^{\diamond}_{\gamma} \text{ \emph{s.t.} } \\ 
     &\hspace{2cm} \EX^{e_{\tst}}[\varepsilon_e \varepsilon_e^\top] = \EX^{\tr}[\varepsilon_{e} \varepsilon_{e}^\top] + A \}.
\end{align*}
\end{theorem}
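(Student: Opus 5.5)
Write $Z \coloneqq f_0(Y,U,\varepsilon_X)$, so that in every environment $X = Z + \varepsilon_e$. Here $(Y,Z)$ has the same joint law in all environments, and $\varepsilon_e$ is independent of $(Y,Z)$. The plan is to expand the risk in an arbitrary environment $e$, use the zero-mean assumption on $Z$ to kill the cross terms, and show that the risk depends on $\P^e_\varepsilon$ only through the second moment $\EX^e[\varepsilon_e\varepsilon_e^\top]$. For every $e\in\calE$ the expansion gives
\[
\EX^e[(Y-\beta^\top X)^2] = \EX[(Y-\beta^\top Z)^2] + \beta^\top \EX^e[\varepsilon_e\varepsilon_e^\top]\beta .
\]
The cross term $-2\EX[(Y-\beta^\top Z)\,\beta^\top\varepsilon_e]$ factorizes by independence. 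It then vanishes because $\EX[Y]=\EX[Z]=0$ by the zero-mean assumption on $g_0$ and $f_0$.

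\textbf{Training objective.} Averaging the expansion over $\calL$ gives
\[
\EX^{\tr}[(Y-\beta^\top X)^2]=\EX[(Y-\beta^\top Z)^2]+\beta^\top \EX^{\tr}[\varepsilon_e\varepsilon_e^\top]\beta .
\]
Since $K=\mu^{\tr}_\varepsilon$ (already noted in the text), $\Var(K)=\Var(\mu^{\tr}_\varepsilon)$. Hence the MIR objective~\eqref{eq:MIR_objective} equals $\EX[(Y-\beta^\top Z)^2]+\beta^\top(\EX^{\tr}[\varepsilon\varepsilon^\top]+\gamma\Var(\mu^{\tr}_\varepsilon))\beta$.

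\textbf{Worst case.} For $e\in\calE^\diamond_\gamma$ with associated $A$, the risk is the training risk plus $\beta^\top A\beta$. Since $0\preceq A\preceq\gamma\Var(\mu^{\tr}_\varepsilon)$, we have $\beta^\top A\beta\le\gamma\beta^\top\Var(\mu^{\tr}_\varepsilon)\beta$ for every $\beta$. So the supremum over $\calE^\diamond_\gamma$ is at most the MIR objective.

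For the reverse inequality we need an environment in $\calE^\diamond_\gamma$ attaining $A=\gamma\Var(\mu^{\tr}_\varepsilon)$. Because $\{\P^e_\varepsilon\}_{e\in\calE}$ contains all distributions on $\R^d$, it suffices to exhibit one distribution with second-moment matrix $M\coloneqq\EX^{\tr}[\varepsilon\varepsilon^\top]+\gamma\Var(\mu^{\tr}_\varepsilon)$. Since $M$ is PSD, $\calN(0,M)$ works; any distribution with second moment $M$ would do. This environment is the same for all $\beta$, so the supremum equals the MIR objective for every $\beta$. The two functions of $\beta$ therefore coincide, and $\beta^{\mathrm{MIR}}_\gamma$ minimizes the worst-case risk.

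\textbf{Main obstacle.} The delicate step is the exact cancellation of the cross term, which is where the anti-causal invariance~\eqref{eq:invariance} and the zero-mean normalization enter. I would state explicitly that $\EX[\varepsilon_e]\neq0$ is allowed, since only $\EX[Y-\beta^\top Z]=0$ is used. This also explains why the class $\calE^\diamond_\gamma$ is naturally parameterized by uncentered second moments rather than covariances. Two further points need a remark: the minimizer of~\eqref{eq:MIR_objective} exists (a convex quadratic bounded below), and ``$\in\argmin$'' is justified by equality of the two objectives.
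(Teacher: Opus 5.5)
Your proposal is correct and follows the paper's proof essentially step for step. The common steps are the MSE decomposition (cross term killed by independence and the zero-mean normalization), the identification of the training risk together with $K=\mu^{\tr}_{\varepsilon}$, and the upper-bound-plus-attainment argument at $A=\gamma\Var(\mu^{\tr}_{\varepsilon})$. The only differences are cosmetic: you skip the paper's full set equality $\{M^e_\varepsilon-\bar{M}_\varepsilon : e\in\calE^\diamond_\gamma\}=\Delta^\diamond_\gamma$, which is not needed, and you give an explicit Gaussian witness for the extremal environment.
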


\begin{figure}[!t]
    \centering
    \includegraphics[width=\columnwidth]{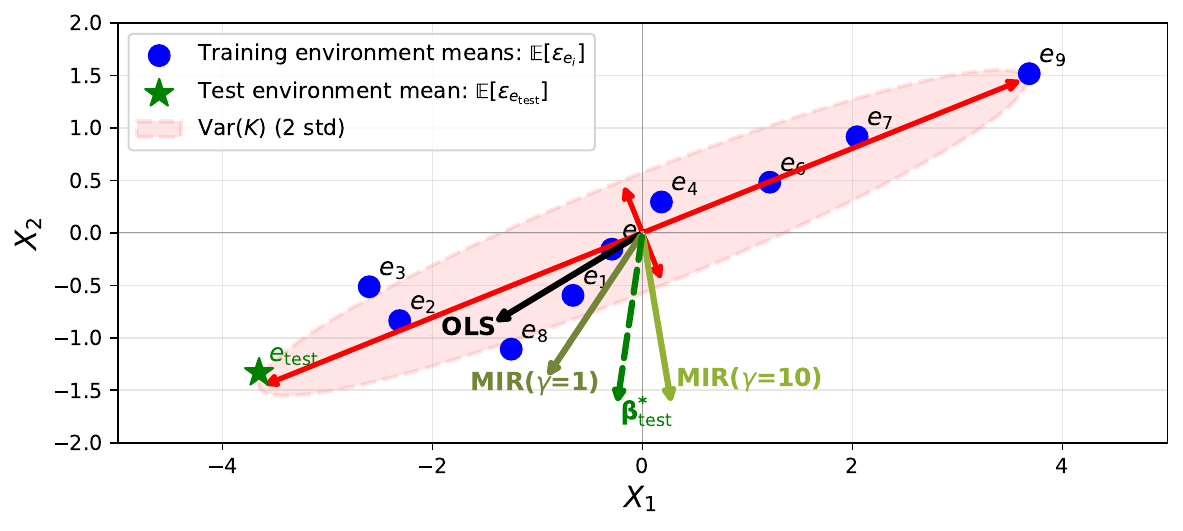} 
    \caption{Illustration of MIR. Blue points represent the training environment perturbation means $\EX[\varepsilon_{e_i}]$, and the green star represents the test environment mean. The red ellipse represents the covariance structure of the regularization matrix $\Var(K)$, with red arrows indicating its (scaled) eigenvectors.
    The annotated arrows show the directions of the OLS and MIR solutions with different regularization strengths $\gamma$, as well as the optimal solution $\beta^{*}_{\mathrm{test}}$ for the test environment $e_{\tst}$.}
    \label{fig:mir_illustration}
    \vspace{-1em}
\end{figure}

Figure~\ref{fig:mir_illustration} illustrates MIR using a simulated example with $p = 10$ environments and 2-dimensional covariates, where one environment is held out as the test environment. The environment perturbation means $\{\EX[\varepsilon_{e_i}]\}_{i=1}^{10}$ are generated from a Gaussian distribution with a covariance structure such that one eigenvalue is much larger than the other. The figure shows the direction of the ordinary least squares (OLS) solution, the MIR solutions for two regularization strengths $\gamma \in \{1, 10\}$, and the optimal solution $\beta^{*}_{\mathrm{test}}$ for the test environment $e_{\tst}$. The regularization strength $\gamma$ controls the tradeoff between the OLS solution ($\gamma = 0$) and the direction of least variability. The optimal choice of $\gamma$ for a given test environment depends on how far the test environment mean is from the training centroid:
guarding against
larger shifts requires stronger regularization. In this example, $\Var(K)$ is full rank, so as $\gamma$ increases, the solution rotates away from the direction of high variability (the first eigenvector) toward the direction of least variance (the second eigenvector), and then shrinks toward zero as $\gamma \to \infty$. More generally, when $\Var(K)$ is not full rank, the solution shrinks toward a vector in the null space of $\Var(K)$ as $\gamma \to \infty$; this null space corresponds to the subspace of directions that are invariant across training environments.

\subsection{Variance-based Invariant Regularization (VIR)}\label{sec:VIR}
Our second strategy targets variations in the covariances 
of $\varepsilon_e$ across environments. 
For each $e \in \calE$, define the covariance matrix $G^{e}_X \coloneqq \Var^{e}(X)$ and the average covariance $\bar{G}_X \coloneqq \tfrac{1}{p} \sum_{j=1}^p G^{e_j}_X$. We consider the following objective:
\begin{align*}\label{eq:VIR_objective}
    \beta^{\text{VIR}}_{\gamma} = 
    \argmin_{\beta \in \R^d}& \EX^{\tr}[(Y - \beta^\top X)^2] \\ 
    &+ \gamma \beta^\top \left(\frac{1}{p}\sum_{i=1}^p (G^{e_i}_X - \bar{G}_X)^2\right) \beta. \numberthis
\end{align*}
This regularization penalizes models that are sensitive to variations in the covariances observed in the unlabeled data.

Analogously to the MIR case, this objective admits a distributionally robust interpretation: robustness is defined with respect to shifts in the covariances of $\varepsilon_{e}$, with the size of the perturbation set controlled by $\gamma$.

\begin{theorem}[VIR Robustness]\label{thm:VIR}
For each $e \in \calE$, let $G^{e}_{\varepsilon} \coloneqq \Var^e(\varepsilon_{e})$ and $\bar{G}_{\varepsilon} \coloneqq \tfrac{1}{p} \sum_{j = 1}^p G^{e_j}_{\varepsilon}$. Define 
\[
\Delta^\dagger_{\gamma} \coloneqq \left\{ A \in \R^{d \times d} :  0 \preceq  A \preceq \gamma \tfrac{1}{p} \sum_{i=1}^p (G^{e_i}_{\varepsilon} - \bar{G}_{\varepsilon})^2 \right\}.
\]
Under Setting~\ref{setting:anti-causal}, we have
\begin{align*}
    \beta^{\emph{VIR}}_{\gamma} 
    &\in \argmin_{\beta \in \R^d} \sup_{e \in \calE^\dagger_{\gamma}} \EX^{e}[(Y - \beta^\top X)^2], \text{ where} \\
    \calE^\dagger_{\gamma} &\coloneqq \{e_{\tst} \in \calE : \exists A \in \Delta^\dagger_{\gamma} \text{ \emph{s.t.}\ } \\ &\hspace{2cm}\EX^{e_{\tst}}[\varepsilon_e \varepsilon_e^\top] = \EX^{\tr}[\varepsilon_{e} \varepsilon_{e}^\top] + A \}.
\end{align*}
\end{theorem}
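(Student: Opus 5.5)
The plan is to split the squared-error risk in any environment into an environment-invariant part and a part that depends only on the second moment of $\varepsilon_e$. The worst case over $\calE^\dagger_{\gamma}$ then becomes the VIR objective plus a constant. Finally, the regularizer built from $\Var^e(X)$ coincides with the one built from $\Var^e(\varepsilon_e)$.

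\emph{Step 1 (risk decomposition).} Write $Z \coloneqq f_0(Y,U,\varepsilon_X)$, so that $X = Z + \varepsilon_e$. The distribution of $(Y,Z)$ does not depend on $e$, and $\varepsilon_e \ci (Y,Z)$ by Setting~\ref{setting:anti-causal}. Expanding $(Y-\beta^\top Z - \beta^\top\varepsilon_e)^2$, the cross term is $-2\,\EX[(Y-\beta^\top Z)]\,\beta^\top\EX^e[\varepsilon_e]$. This vanishes because $Y=g_0(U,\varepsilon_Y)$ and $Z$ have zero mean. Hence for every $e\in\calE$,
\begin{equation*}
\EX^e[(Y-\beta^\top X)^2] = R_0(\beta) + \beta^\top \EX^e[\varepsilon_e\varepsilon_e^\top]\beta, \qquad R_0(\beta)\coloneqq \EX[(Y-\beta^\top Z)^2].
\end{equation*}
Averaging over $\calL$ gives $\EX^{\tr}[(Y-\beta^\top X)^2] = R_0(\beta) + \beta^\top \EX^{\tr}[\varepsilon_e\varepsilon_e^\top]\beta$.

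\emph{Step 2 (computing the supremum).} By definition of $\calE^\dagger_\gamma$, each $e_{\tst}\in\calE^\dagger_\gamma$ has $\EX^{e_{\tst}}[\varepsilon\varepsilon^\top] = \EX^{\tr}[\varepsilon\varepsilon^\top] + A$ for some $A\in\Delta^\dagger_\gamma$. Step 1 then gives
\begin{equation*}
\EX^{e_{\tst}}[(Y-\beta^\top X)^2] = \EX^{\tr}[(Y-\beta^\top X)^2] + \beta^\top A\beta .
\end{equation*}
Write $M_\varepsilon \coloneqq \tfrac1p\sum_i (G^{e_i}_\varepsilon-\bar G_\varepsilon)^2$. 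For every admissible $A$ we have $\beta^\top A\beta\le \gamma\beta^\top M_\varepsilon\beta$.

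The bound is attained. The matrix $\EX^{\tr}[\varepsilon\varepsilon^\top]+\gamma M_\varepsilon$ is positive semidefinite, so a distribution on $\R^d$ with that second-moment matrix exists, for example a centered Gaussian. Since $\{\P^e_\varepsilon\}_{e\in\calE}$ contains all distributions, some $e\in\calE^\dagger_\gamma$ realizes $A=\gamma M_\varepsilon$. Therefore
\begin{equation*}
\sup_{e\in\calE^\dagger_\gamma}\EX^e[(Y-\beta^\top X)^2] = \EX^{\tr}[(Y-\beta^\top X)^2] + \gamma\beta^\top M_\varepsilon\beta
\end{equation*}
for every $\beta$.

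\emph{Step 3 (identifying the regularizer).} By the independence of $Z$ and $\varepsilon_e$, we have $G^e_X = \Var(Z) + G^e_\varepsilon$. The term $\Var(Z)$ is the same in every environment, so $G^{e_i}_X - \bar G_X = G^{e_i}_\varepsilon-\bar G_\varepsilon$ for each $i$. Consequently $\tfrac1p\sum_i (G^{e_i}_X-\bar G_X)^2 = M_\varepsilon$. The right-hand side in Step 2 is therefore exactly the VIR objective~\eqref{eq:VIR_objective}, and its minimizer $\beta^{\text{VIR}}_\gamma$ lies in the minimax argmin.

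I expect no serious obstacle. The points that need care are the vanishing of the cross term, which relies on the zero-mean assumption on both $f_0$ and $g_0$, and the attainment of the supremum in Step 2, which relies on the richness of $\calE$. The same argument applies to Theorem~\ref{thm:MIR}; there the identity needed in Step 3 is $K=\mu^{\tr}_\varepsilon$, which the paper has already observed.
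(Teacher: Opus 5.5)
Your proposal is correct and follows essentially the same route as the paper's proof. It uses the decomposition $R_0(\beta)+\beta^\top \EX^e[\varepsilon_e\varepsilon_e^\top]\beta$ (the paper's Lemma~\ref{lem:mse_decomposition}), an upper bound with attainment at $A=\gamma\tfrac1p\sum_i (G^{e_i}_\varepsilon-\bar G_\varepsilon)^2$ via the richness of $\calE$, and the identity $G^{e_i}_X-\bar G_X=G^{e_i}_\varepsilon-\bar G_\varepsilon$. The paper proves the full set equality $\{\EX^e[\varepsilon_e\varepsilon_e^\top]-\EX^{\tr}[\varepsilon_e\varepsilon_e^\top] : e\in\calE^\dagger_\gamma\}=\Delta^\dagger_\gamma$, whereas you only realize the extremal element, which is sufficient, and you give an explicit Gaussian witness for it.
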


To provide further intuition for VIR, consider the case where the covariance matrices of the environment perturbations, $\{G^{e_i}_{\varepsilon}\}_{i=1}^{p}$, share a common eigenbasis, i.e., there exists an orthogonal matrix $Q \in \R^{d \times d}$ such that, for all $i \in \{1, \ldots, p\}$: $G^{e_i}_{\varepsilon} = Q \Lambda^{e_i} Q^\top$, where $\Lambda^{e_i} = \diag(\lambda^{e_i}_1, \ldots, \lambda^{e_i}_d)$ is the diagonal matrix of eigenvalues of $G^{e_i}_{\varepsilon}$. Define the per-coordinate variance of eigenvalues across environments as
\[
\sigma^2_j \coloneqq \frac{1}{p} \sum_{i=1}^{p} \left( \lambda^{e_i}_j - \bar{\lambda}_j \right)^2, \quad \text{where } \bar{\lambda}_j \coloneqq \frac{1}{p} \sum_{i=1}^{p} \lambda^{e_i}_j.
\]
In Appendix~\ref{app:VIR_shared_eigenbasis}, we show that the VIR regularization term simplifies to
\begin{equation}
\beta^\top \left( \frac{1}{p} \sum_{i=1}^{p} (G^{e_i}_X - \bar{G}_X)^2 \right) \beta = \sum_{j=1}^{d} \sigma^2_j (\tilde{\beta}_j)^2,
\end{equation}
where $\tilde{\beta} = Q^\top \beta$ denotes the model coefficients projected onto the shared eigenspace. Recall that the eigenvalues $\lambda^{e_i}_j$ represent the variance along the $j$-th eigenvector direction in environment $e_i$. This simplification provides a clear interpretation: VIR applies stronger regularization to directions along which the variances differ more across environments, thereby encouraging the model to rely on directions that are more stable. 

\begin{remark}\label{remark:higher_order}
\begin{enumerate}[label=(\roman*)]
    \item Appendix~\ref{app:VIR_alternative} discusses an alternative formulation of VIR that directly penalizes changes in the variance of predictions $\Var(\beta^\top X)$ across environments. Although this may sound intuitive, it is vulnerable to a cancellation issue: there may exist directions $\beta$ that cancel out the observed training shifts, yielding zero penalty even though the direction is sensitive to unseen shifts. Our VIR avoids this issue; see the appendix for such an example. 
    \item In practice, environment perturbations may affect both the means and covariances. We can combine both regularization terms in a single objective, as discussed in Appendix~\ref{app:MIR_VIR}. 
    \item Our use of causal structure (see Setting~\ref{setting:anti-causal}) differs from its use in causal identification: rather than yielding an estimator whose validity hinges on the assumed graph, the causal assumptions motivate when the proposed regularization should improve generalization. In practice, the regularization strength $\gamma$ is selected by nested leave-one-environment-out cross-validation on the training environments. When the assumptions are substantially violated, the regularization no longer improves out-of-environment risk, and the cross-validation procedure would select $\gamma \approx 0$, falling back to the unregularized estimator. We illustrate this empirically in Section~\ref{sec:violated_assumptions}.
\end{enumerate}
\end{remark}

\section{Finite-sample estimators}\label{sec:estimators}

In this section, we discuss finite-sample estimators for the regularized regressions introduced in Section~\ref{sec:method}. We first present the population-level solutions and introduce plug-in estimators based on sample quantities. We then discuss consistency of the resulting estimators.

\subsection{Population-level solutions}

Consider the MIR and VIR objectives~\eqref{eq:MIR_objective} and~\eqref{eq:VIR_objective} with squared error loss. Both objectives are quadratic in $\beta$ and admit closed-form solutions.
The population-level minimizer takes the form
\begin{equation}\label{eq:population_solution}
    \beta_{\gamma} = \left( \EX^{\tr}[X X^\top] + \gamma H \right)^{-1} \EX^{\tr}[X Y],
\end{equation}
where $H \in \R^{d \times d}$ is a positive semi-definite matrix that encodes the directions of variation of environment perturbations $\varepsilon_e$. Specifically, for MIR, we have $H_{\text{MIR}} = \Var(K)$, and for VIR, $H_{\text{VIR}} = \frac{1}{p} \sum_{i=1}^{p} \left( G^{e_i}_X - \bar{G}_X \right)^2$.

\subsection{Plug-in estimators}

Given labeled data $D^{\tr}_{X,Y}$ and unlabeled data $D^{\tr}_X$ (as defined in Section~\ref{sec:learning_setting}), we construct plug-in estimators by replacing population quantities with their sample counterparts. Let $\mathbf{X} \in \R^{k \times d}$ denote the design matrix with rows $X_i^\top$ and $\mathbf{Y} \in \R^{k}$ denote the outcome vector, both constructed from the labeled data $D^{\tr}_{X,Y}$. The plug-in estimator is
\begin{equation}\label{eq:closed_form}
    \hat{\beta}_{\gamma} = \left( \frac{1}{k} \mathbf{X}^\top \mathbf{X} + \gamma \hat{H} \right)^{-1} \frac{1}{k} \mathbf{X}^\top \mathbf{Y},
\end{equation}
where $\hat{H} \in \R^{d \times d}$ is estimated from the unlabeled data $D^{\tr}_X$.

For MIR, we use
\begin{equation}\label{eq:H_MIR}
    \hat{H}_{\text{MIR}} \coloneqq\frac{1}{p} \hat{K} \hat{K}^\top - \big(\frac{1}{p} \hat{K} \mathbf{1}_p\big)\big(\frac{1}{p} \hat{K} \mathbf{1}_p\big)^\top,
\end{equation}
where $\hat{K} \coloneqq \begin{bmatrix} \hat{\mu}^{e_1}_X & \cdots & \hat{\mu}^{e_p}_X \end{bmatrix} \in \R^{d \times p}$ and $\hat{\mu}^{e_i}_X \coloneqq \frac{1}{n_i} \sum_{j \in \calN_i} X_j$ is the sample mean of covariates in environment $e_i$, with $\calN_i \coloneqq \{j \in \{1,\dots,n\} | E_j = e_i\}$ and $n_i \coloneqq |\calN_i|$.

For VIR, we use
\begin{equation}\label{eq:H_VIR}
    \hat{H}_{\text{VIR}} \coloneqq \frac{1}{p} \sum_{i=1}^{p} \left( \hat{G}^{e_i}_X - \hat{\bar{G}}_X \right)^2,
\end{equation}
where $\hat{G}^{e_i}_X \coloneqq \frac{1}{n_i} \sum_{j \in \calN_i} (X_j - \hat{\mu}^{e_i}_X) (X_j - \hat{\mu}^{e_i}_X)^\top$ is the sample covariance matrix in environment $e_i$ and $\hat{\bar{G}}_X \coloneqq \frac{1}{p} \sum_{i=1}^{p} \hat{G}^{e_i}_X$ is the average sample covariance across environments.

The following proposition establishes that the plug-in estimators are consistent for the population-level solutions.

\begin{proposition}[Consistency]\label{prop:consistency}
Assume that $\EX^e[\|X\|_2^2] < \infty$ for all $e \in \calE^{\tr}$ and $\EX^e[Y^2] < \infty$ for all $e \in \calL$, and that $\EX^{\tr}[X X^\top] + \gamma H$ is non-singular. Assume further that the labeled data are sampled in a balanced manner\footnote{The balanced-sampling assumption can be dropped by redefining $\EX^{\tr}[\cdot]$ as the corresponding weighted average. We adopt the balanced version here for clarity of presentation.}, i.e., $k_e / k \to 1/|\calL|$ for each $e \in \calL$ as $k \to \infty$, where $k_e$ is the number of labeled observations from environment $e$. If $k \to \infty$ and $n_i \to \infty$ for all $i \in \{1, \dots, p\}$, then
\begin{equation}
    \hat{\beta}_{\gamma} \xrightarrow{p} \beta_{\gamma}.
\end{equation}
\end{proposition}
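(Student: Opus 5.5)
The argument combines the law of large numbers with the continuous mapping theorem. The estimator~\eqref{eq:closed_form} is a continuous function of three sample quantities: $\hat{\Sigma} \coloneqq \frac{1}{k}\mathbf{X}^\top\mathbf{X}$, $\hat{c} \coloneqq \frac{1}{k}\mathbf{X}^\top\mathbf{Y}$ and $\hat{H}$. The map $(\Sigma, c, H) \mapsto (\Sigma + \gamma H)^{-1} c$ is continuous on the open set where $\Sigma + \gamma H$ is non-singular. By assumption, the population point $(\EX^{\tr}[XX^\top], \EX^{\tr}[XY], H)$ lies in this set. So it suffices to show $\hat{\Sigma} \xrightarrow{p} \EX^{\tr}[XX^\top]$, $\hat{c} \xrightarrow{p} \EX^{\tr}[XY]$ and $\hat{H} \xrightarrow{p} H$ (jointly, which follows from marginal convergence in probability to constants). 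The continuous mapping theorem then gives $\hat{\beta}_\gamma \xrightarrow{p} \beta_\gamma$. We treat observations as independent within each environment, and as independent across environments, as implicit in the sampling description of Section~\ref{sec:learning_setting}.

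\textbf{Labeled part.} Decompose by environment:
\[
\hat{\Sigma} = \sum_{e \in \calL} \frac{k_e}{k} \cdot \frac{1}{k_e}\sum_{i: E_i = e} X_i X_i^\top .
\]
Balanced sampling gives $k_e \to \infty$ for each $e \in \calL$. The moment assumption $\EX^e[\|X\|_2^2]<\infty$ makes the entries of $XX^\top$ integrable, so the weak law of large numbers gives $\frac{1}{k_e}\sum X_iX_i^\top \xrightarrow{p} \EX^e[XX^\top]$. Together with $k_e/k \to 1/|\calL|$, Slutsky yields $\hat{\Sigma} \xrightarrow{p} \frac{1}{|\calL|}\sum_{e\in\calL}\EX^e[XX^\top] = \EX^{\tr}[XX^\top]$. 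The cross moment $\hat{c}$ is handled the same way: the entries $X_jY$ are integrable by Cauchy--Schwarz, since $\EX^e[|X_jY|] \le (\EX^e[X_j^2]\EX^e[Y^2])^{1/2} < \infty$.

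\textbf{Unlabeled part (MIR).} For each $i$, the weak law of large numbers with finite first moments gives $\hat{\mu}^{e_i}_X \xrightarrow{p} \EX^{e_i}[X]$ as $n_i \to \infty$. Since $p$ is fixed, this yields $\hat{K} \xrightarrow{p} K$. The matrix $\hat{H}_{\mathrm{MIR}}$ in~\eqref{eq:H_MIR} is a polynomial, hence continuous, function of $\hat{K}$, so $\hat{H}_{\mathrm{MIR}} \xrightarrow{p} \Var(K)$.

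\textbf{Unlabeled part (VIR).} This is the only step needing slight care, because the sample covariance involves the estimated mean. Write
\[
\hat{G}^{e_i}_X = \frac{1}{n_i}\sum_{j\in\calN_i} X_jX_j^\top - \hat{\mu}^{e_i}_X(\hat{\mu}^{e_i}_X)^\top .
\]
The first term converges in probability to $\EX^{e_i}[XX^\top]$ by the law of large numbers, which uses the finite second moment. The second term converges to $\EX^{e_i}[X]\EX^{e_i}[X]^\top$ by continuous mapping. Hence $\hat{G}^{e_i}_X \xrightarrow{p} G^{e_i}_X$. Then $\hat{\bar{G}}_X$ and $\hat{H}_{\mathrm{VIR}} = \frac{1}{p}\sum_i(\hat{G}^{e_i}_X-\hat{\bar{G}}_X)^2$ are continuous (polynomial) functions of finitely many converging matrices, so $\hat{H}_{\mathrm{VIR}} \xrightarrow{p} H_{\mathrm{VIR}}$.

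The main obstacle is bookkeeping rather than depth. Two points need care: (i) verifying that the second-moment assumptions suffice for each law-of-large-numbers application, since only finite means of $XX^\top$ and $XY$ are required and no finite fourth moments; and (ii) noting that the first $k$ unlabeled points coincide with the labeled ones. Point (ii) is harmless because we only need convergence in probability of each component to a constant, which needs no independence between the labeled and unlabeled estimators.

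A technical remark applies to $\hat{\beta}_\gamma$ itself. With probability tending to one, $\hat{\Sigma} + \gamma\hat{H}$ is invertible, because the determinant is continuous and the population determinant is nonzero. On the remaining event the estimator can be defined arbitrarily without affecting the limit.
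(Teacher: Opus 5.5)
Your proposal is correct and follows essentially the same route as the paper's proof. Both arguments apply a per-environment weak law of large numbers with the balanced-sampling weights to $\frac{1}{k}\mathbf{X}^\top\mathbf{X}$ and $\frac{1}{k}\mathbf{X}^\top\mathbf{Y}$, use the law of large numbers plus continuous mapping for $\hat{H}$, and finish with continuous mapping via continuity of matrix inversion at the non-singular limit; your extra details (the Cauchy--Schwarz bound for integrability of $XY$, the identity $\hat{G}^{e_i}_X = \frac{1}{n_i}\sum_{j} X_j X_j^\top - \hat{\mu}^{e_i}_X(\hat{\mu}^{e_i}_X)^\top$, and invertibility of $\hat{\Sigma} + \gamma\hat{H}$ with probability tending to one) just make explicit steps the paper leaves implicit.
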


\section{Extension: Different loss functions and nonlinear models}\label{sec:extension}

In the previous sections, we considered the setting where $\calF$ is the class of linear functions and the loss $\ell$ is the squared error. This section discusses extensions of our approach to other loss functions and to nonlinear models.

\subsection{Other loss functions} \label{sec:other_loss}
It is straight-forward to apply the proposed methods to other loss functions: the regularization terms (both MIR and VIR) depend solely on the unlabeled data, so the only change to the optimization problem for learning is the addition of a quadratic term.
For example, 
for the MIR objective we get
\begin{equation}\label{eq:MIR_other_losses}
    \argmin_{\beta \in \R^d} \EX^{\tr}[\ell(\beta^\top X, Y)] + \gamma \beta^\top \Var(K) \beta,
\end{equation}
where $K = \begin{bmatrix} \EX^{e_1}[X] & \cdots & \EX^{e_p}[X] \end{bmatrix}$ is defined as before. Although the precise robustness guarantee differs from Theorem~\ref{thm:MIR}, the intuition remains the same: we penalize $\beta$ in directions that are sensitive to variation in the means of $\varepsilon_e$.
VIR can be applied analogously by replacing the MIR regularization term with that of VIR defined in~\eqref{eq:VIR_objective}.

We view this flexibility as another advantage of our approach compared to methods that rely on residual invariance, such as Anchor regression~\citep{rothenhausler2021anchor}. In particular, extending Anchor regression to other losses requires defining new notions of residuals with more involved regularization terms~\citep{kook2022distributional}.

\subsection{Nonlinear models}\label{sec:nonlinear}

To apply our approach to high-dimensional and/or unstructured data, we adopt a representation learning perspective: we first transform $X$ via a nonlinear map $\phi$ and then apply a linear predictor on top of the learned representation $\phi(X)$. The regularization is then applied to the linear predictor. This corresponds to modeling environment perturbations in a latent space rather than in the observed covariate space, which is particularly relevant when $X$ consists of unstructured data such as sensors, images, or text, where environment perturbations may be more meaningfully captured in a learned representation.
Similar considerations have been discussed in, e.g., \citet{rosenfeld2022domain, saengkyongam2024identifying, vsola2025causality, von2025representation}.

Concretely, for a function class $\Phi$, we consider the MIR objective
\begin{equation}\label{eq:MIR_nonlinear}
    \argmin_{\phi \in \Phi, \beta \in \R^{d'}} \EX^{\tr}[\ell(\beta^\top \phi(X), Y)] + \gamma \beta^\top \Var(K_{\phi}) \beta,
\end{equation}
where $K_{\phi} \coloneqq \begin{bmatrix} \EX^{e_1}[\phi(X)] & \cdots & \EX^{e_p}[\phi(X)] \end{bmatrix} \in \R^{d' \times p}$ and $d'$ is the dimension of the representation. Here, the regularization encourages the linear predictor $\beta$ to be insensitive to environment perturbations in the representation space. VIR can be extended analogously by computing the variance of covariances in the representation space, as in~\eqref{eq:VIR_objective}.

In many applications, the representation $\phi$ may be learned separately from the objective~\eqref{eq:MIR_nonlinear} rather than through joint optimization. Given the recent advancements in foundation models that provide meaningful representations across various data modalities, our regularization strategy can be naturally combined with these pretrained representations by fine-tuning  the linear head $\beta$ using the objective \eqref{eq:MIR_nonlinear} while keeping the representation $\phi$ fixed. We adopt this approach in our real-world experiments (see Section~\ref{sec:vital_db}).

\section{Experiments}\label{sec:experiment}
We evaluate our methods on two complementary real-world datasets that naturally exhibit anti-causal structure: a controlled physical system and a real-world biomedical application. Full details on experimental setups are provided in Appendix~\ref{app:light_tunnel}~and~\ref{app:vitaldb}.

\subsection{Causal Chamber: The Light Tunnel}
\label{sec:causal_chamber}

The Light Tunnel \citep{gamella2025causal} is a real, controllable, optical experiment designed for benchmarking machine learning methods. The tunnel consists of different light sources, polarizers and sensors that allow to measure and manipulate the physical variables of the system, while providing a causal ground truth of the effects between them.

\begin{figure}[!t]
    \centering
    \includegraphics[width=\columnwidth]{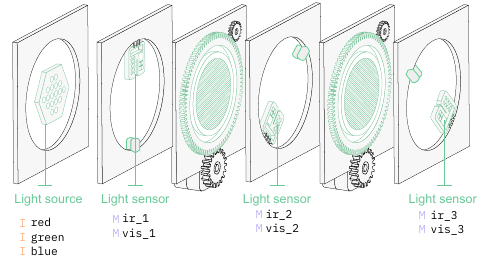} 
    \caption{
    Diagram of the light tunnel and the subset of variables used in our experiment: the outcomes or intervention targets \texttt{red}, \texttt{green} and \texttt{blue}, and the light-intensity measurements \texttt{ir\_1}, \texttt{vis\_1}, \ldots, \texttt{vis\_3} used as predictors. Figure adapted from \citet{gamella2025causal}, licensed under CC BY 4.0.
    }
    \label{fig:lt}
    \vspace{-1em}
\end{figure}

For our experiments, we focus on the main RGB light source and its effect on the different light-intensity sensors placed throughout the tunnel (Figure~\ref{fig:lt}). We select the brightness setting of one of the three color channels (red, green or blue) as the outcome $Y \in \R$ and use the measurements produced by the sensors as the predictors $X \in \R^6$.

We create six 
environments $e \in \calE$ by performing interventions on another color channel by shifting the uniform distribution from which their brightness settings are sampled.
By construction, these interventions induce only mean shifts in the covariate distribution, making this a suitable setting to evaluate MIR (VIR is evaluated in another experiment, see Section~\ref{sec:vital_db}).
We repeat the overall experiment six times by considering each outcome-intervention pair (e.g., outcome red with intervention on blue), resulting in a total of six experiments with six distinct environments each.

\paragraph{Baselines.} We compare the following methods:
(i) \texttt{Pooled-Ridge}: Ridge regression pooling data across all training environments.
(ii) \texttt{MIR}: Our mean-based invariant regularization, with the regularization parameter selected via leave-one-environment-out cross-validation on the training environments.
(iii) \texttt{MIR-Oracle}: Our method with the regularization parameter tuned on the test environment, serving as an upper bound on \texttt{MIR} performance.
(iv) \texttt{AnchorReg}: Anchor regression \citep{rothenhausler2021anchor}.
(v) \texttt{GroupDRO}: Group distributionally robust optimization \citep{sagawa2020distributionally}.

\paragraph{Evaluation setup.} For each outcome-intervention pair, we adopt a leave-one-environment-out evaluation protocol: one environment is held out for testing while models are trained on the remaining five environments. To evaluate the semi-supervised setting, we vary the number of training environments that have labeled data from 3 to 5,
while using unlabeled data from all training environments for \texttt{MIR}. Predictive performance is assessed using root mean squared error (RMSE) on the held-out environment (Figure~\ref{fig:chamber}). 

\begin{figure}[!t]
    \centering
    \includegraphics[width=\columnwidth]{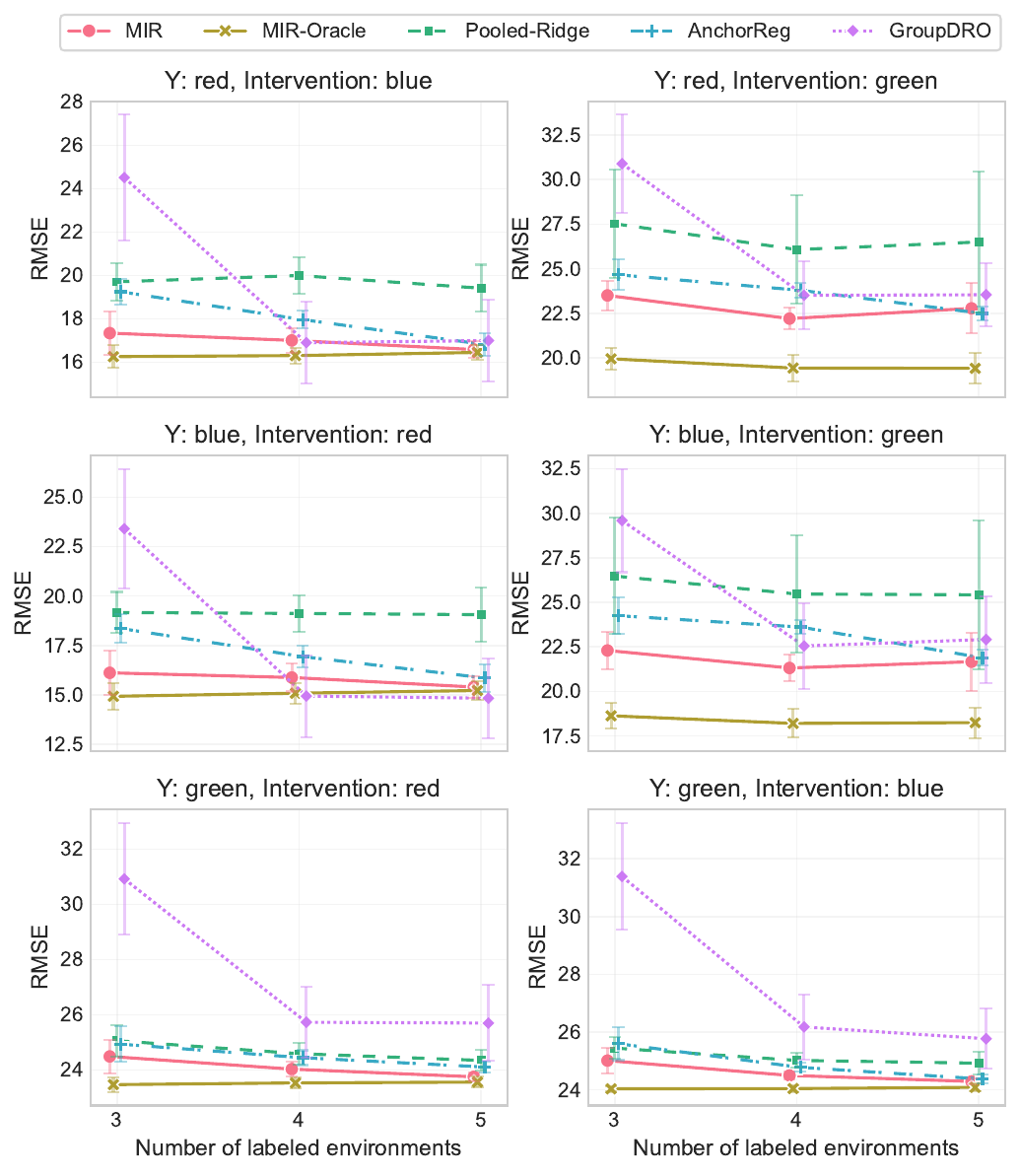} 
    \caption{
    Performance on the Light Tunnel dataset. We show average RMSE (with standard errors) for leave-one-environment-out cross-validation across all outcome-intervention combinations. The x-axis indicates the number of environments with labeled observations; \texttt{MIR} uses unlabeled data from all training environments regardless of this number. 
    }
    \label{fig:chamber}
\end{figure}

\paragraph{Results.} Figure~\ref{fig:chamber} shows the results across all outcome-intervention pairs. \texttt{Pooled-Ridge} performs poorly when generalizing to unseen environments, particularly when few labeled environments are available. This indicates that, in this experiment, standard empirical risk minimization is not sufficient to generalize to unseen environments. \texttt{MIR} achieves lower or comparable error to all baselines across all settings. The improvement over baselines is most pronounced when the number of labeled environments is small; this is the regime where 
leveraging unlabeled data is most valuable. 
Comparing \texttt{MIR} and \texttt{MIR-Oracle} suggests that leave-one-environment-out cross-validation often yields  regularization parameters that are close to being optimal but also that there are settings which carry potential for improved ways of choosing the regularization parameter.

\subsection{VitalDB: Stroke Volume Prediction from Arterial Waveforms}\label{sec:vital_db}

As a real-world application in the healthcare domain, we apply our approach to predicting stroke volume from arterial pressure waveform (APW) signals using data from the VitalDB dataset \citep{lee2022vitaldb}. Predicting stroke volume for unseen subjects is challenging due to pronounced between-subject variability that induces distribution shifts. This task exhibits a natural anti-causal structure: the stroke volume $Y$ causally influences the observed arterial pressure waveform $X$, while unobserved physiological factors $U$ may affect both.

\paragraph{Preprocessing.} Absolute prediction of hemodynamic parameters from arterial waveforms typically requires subject-specific calibration, such as thermodilution, to establish a baseline reference point \citep{saugel2021cardiac, manduchi2024leveraging}. Without such calibration, differences in baseline physiology across subjects (e.g., due to age, vascular stiffness, or body composition) may confound the relationship between waveform features and absolute hemodynamic values. In the context of Setting~\ref{setting:anti-causal}, these baseline differences correspond to direct shifts from $E$ to $U$ (or $Y$), which would violate the assumption that the environment does not affect $Y$ (through $U$). We therefore focus on predicting variations in stroke volume rather than absolute values. Concretely, we consider a subject-centered approach where both predictors and outcomes are mean-centered within each subject. To obtain absolute predictions, one can combine our predictions with a baseline reference point from standard calibration procedures \citep{saugel2021cardiac}. In this subject-centered setting, mean shifts are removed by construction, making our Variance-based Invariant Regularization (VIR) the appropriate method to address the remaining covariance shifts in $X$ across subjects.

\paragraph{Feature representation.} Following the formulation in Section~\ref{sec:nonlinear}, we extract 140-dimensional embeddings from arterial pressure waveforms using a pretrained convolutional neural network developed by \citet{manduchi2024leveraging, palumbo2025hybrid}. This model was trained using a neural posterior estimator \citep{lueckmann2017flexible} on synthetic data generated from the OpenBF cardiovascular simulator \citep{Benemerito_2024}, which models hemodynamics based on physiological parameters. The learned embeddings capture physiologically relevant features of the arterial pressure waveform without requiring labeled clinical data for training \citep{manduchi2024leveraging, palumbo2025hybrid}. This pretrained representation serves as the fixed nonlinear map $\phi$, and we apply all considered methods to the linear head on top of these embeddings. Concretely, we have:
\begin{itemize}
    \item $X \in \R^{140}$: APW embeddings (subject-centered)
    \item $Y \in \R$: Stroke volume variations (subject-centered)
    \item $e \in \calE$: Individual subjects ($n = 128$ subjects)
\end{itemize}

\paragraph{Baselines.} We consider the same baselines as in Section~\ref{sec:causal_chamber}, with the exception of \texttt{AnchorReg}: since the subject-centered preprocessing removes all mean shifts by construction, the Anchor regression regularization term vanishes. We replace it with \texttt{DRIG} (Distributional Robustness via Invariant Gradients)~\citep{shen2025causality}, an extension of Anchor regression that also tackles covariance shifts.

\paragraph{Evaluation setup.} We treat each subject as a distinct environment and adopt a leave-one-subject-out evaluation protocol: one subject is held out for testing while models are trained on the remaining subjects. To evaluate the semi-supervised setting, we vary the number of training subjects with labeled observations from 20 to 127, while using unlabeled data from all training subjects for VIR. Predictive performance is assessed on the held-out subject using normalized mean squared error (nMSE), defined as the mean squared error divided by the variance of the subject's outcome. This normalization ensures that subjects with different outcome variability contribute equally to the evaluation; otherwise, subjects with larger stroke volume variations would dominate the aggregate error. 
To assess robustness, we report the conditional value at risk (CVaR), computed as the average nMSE for subjects above a given quantile threshold of the nMSE distribution, where higher quantiles correspond to worse-performing subjects. Additionally, we report Spearman's correlation between predicted and true stroke volume variations to evaluate tracking performance (i.e., how well predictions follow the temporal variations within a subject regardless of magnitude).

\begin{figure}[!t]
    \centering
    \begin{subfigure}[b]{1.\columnwidth}
        \includegraphics[width=\columnwidth]{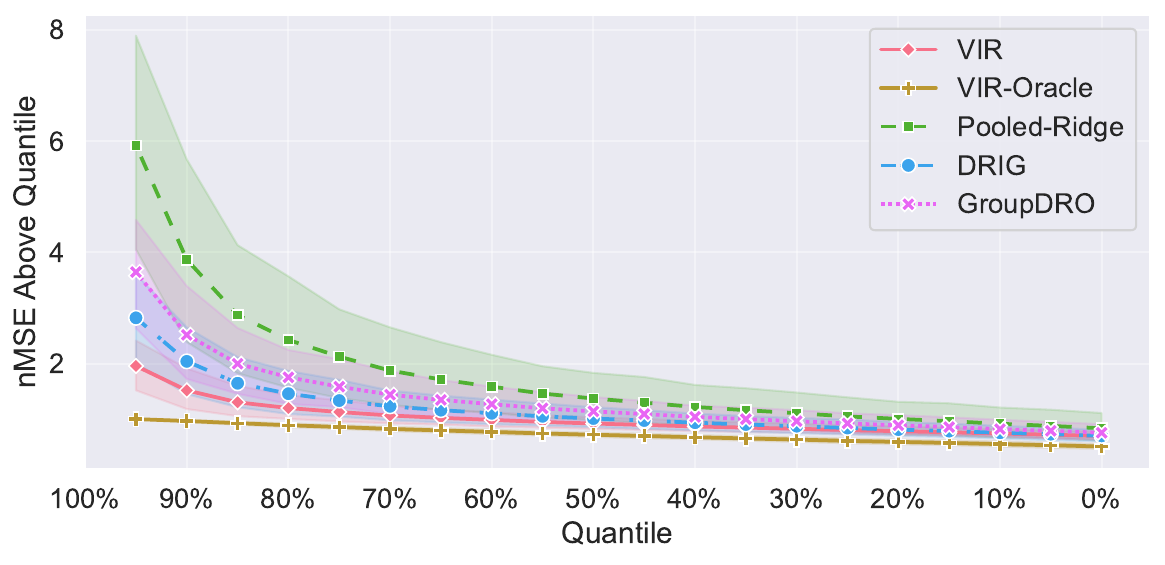}
        \subcaption{Robustness analysis}
        \label{fig:vitaldb_robustness}
    \end{subfigure}
    \begin{subfigure}[b]{1.\columnwidth}
        \includegraphics[width=\columnwidth]{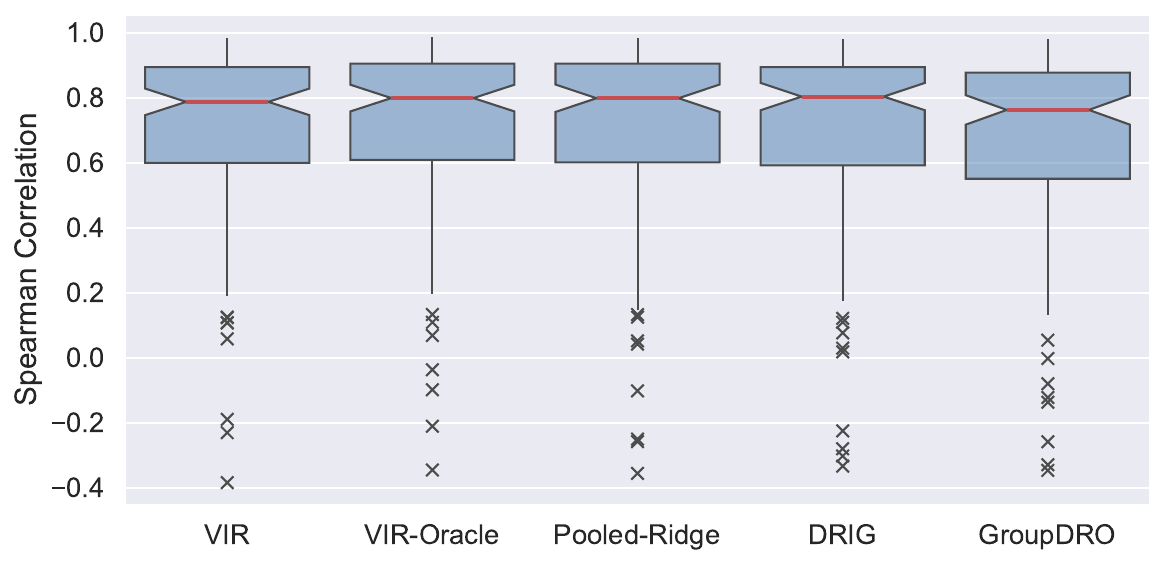}
        \subcaption{Tracking performance 
        }
        \label{fig:vitaldb_corr}
    \end{subfigure}
\caption{Performance on VitalDB dataset when all 128 training subjects have labeled data. (a) CVaR (average nMSE for subjects whose errors are above a given quantile) as a function of quantile threshold. (b) Distribution of per-subject Spearman's correlations between predicted and true stroke volume variations. \texttt{VIR} achieves improved robustness, especially on worse-performing subjects, while maintaining tracking performance comparable to baselines.}
    \label{fig:vitaldb_results}
\end{figure}

\begin{figure}[!t]
    \centering
    \includegraphics[width=\columnwidth]{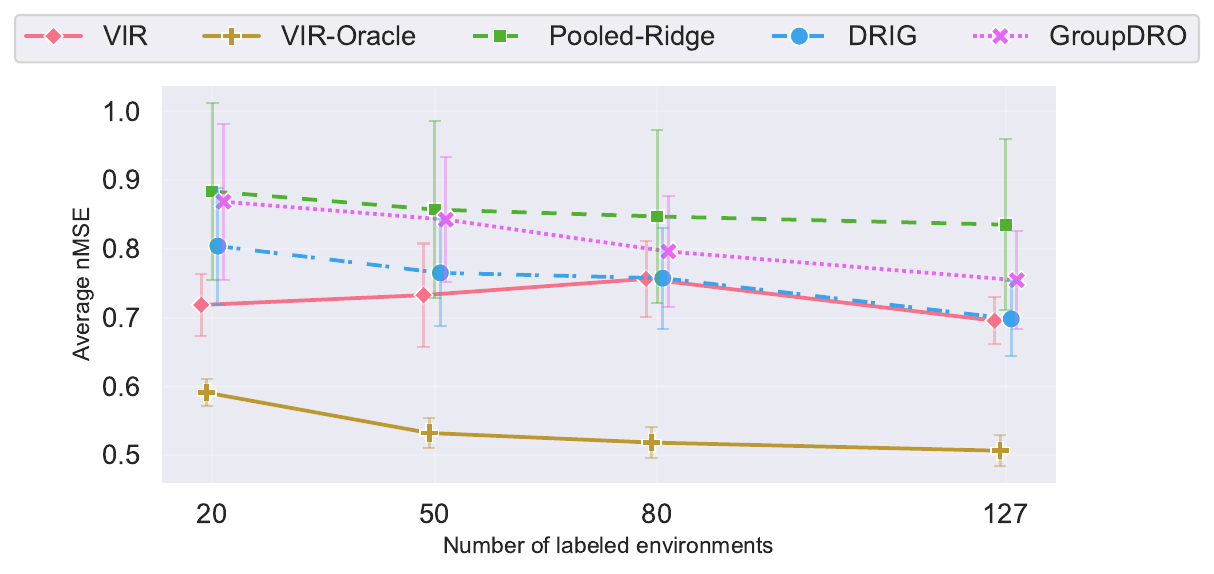} 
\caption{Performance on VitalDB dataset as a function of the number of labeled environments. \texttt{VIR} improves over baselines across all settings. The improvement is most pronounced when few labeled environments are available.}
    \label{fig:vital_lineplots}
\end{figure}

\paragraph{Results.} Figure~\ref{fig:vitaldb_robustness} shows the robustness analysis when all 128 training subjects have labeled data, where each point represents the CVaR at a given quantile threshold. The \texttt{Pooled-Ridge} baseline exhibits poor robustness; its error increases substantially for more difficult subjects. \texttt{GroupDRO} and \texttt{DRIG} improve over \texttt{Pooled-Ridge} but still show relatively poor performance on difficult subjects. \texttt{VIR} achieves lower nMSE across all quantiles, with the improvement most pronounced on difficult subjects. Since difficult subjects correspond to those with larger shifts, this is consistent with VIR's objective of penalizing sensitivity to such shifts.

A natural follow-up question is whether these robustness gains compromise tracking performance. Regularization could improve worst-case nMSE simply by shrinking predictions toward zero rather than by learning robust structure.
Figure~\ref{fig:vitaldb_corr} shows the distribution of per-subject Spearman correlations. \texttt{VIR} achieves median correlation comparable to \texttt{Pooled-Ridge}, with both methods centered around 0.8. The robustness improvements therefore do not come at the cost of tracking accuracy.

Lastly, Figure~\ref{fig:vital_lineplots} shows average nMSE as a function of the number of labeled environments. \texttt{VIR} improves (or matches) over the baselines in all settings. The improvement is most pronounced when the number of labeled environments is small: in this regime, the regularization matrix estimated from unlabeled data provides information on environment perturbations that may not be possible to infer from labeled data alone. When all training subjects have labeled data, \texttt{VIR} still shows improvements over the baselines, which suggests that our approach yields benefits beyond the semi-supervised setting. However, how to best tune the hyperparameter $\gamma$ remains an open problem, as the gap between \texttt{VIR} and \texttt{VIR-Oracle} is still significant.

\subsection{Violations of the causal assumptions}\label{sec:violated_assumptions}
We perform two experiments designed to probe how our methods behave when the assumptions of Setting~\ref{setting:anti-causal} are violated. Full details and figures are in Appendix~\ref{app:violated_assumptions}.

\paragraph{Violation of the anti-causal direction.}
We use the Light Tunnel data (Section~\ref{sec:causal_chamber}) but reverse the prediction direction: we predict a light-intensity measurement (\texttt{ir\_1}) from the RGB brightness settings. This violates the anti-causal assumption because the predictors $X$ (the RGB brightness) now affect the outcome $Y$ (light-intensity). Empirically, MIR collapses to Pooled-Ridge: the leave-one-environment-out cross-validation procedure selects $\gamma \approx 0$, so MIR's predictions coincide with those of Pooled-Ridge. This shows that potential violations of the anti-causal assumption, to certain extent, can be realized from data.

\paragraph{Violation of no effect from $E$ to $Y$.}
We repeat the VitalDB experiment (Section~\ref{sec:vital_db}) without subject-centering. In this setting, the environment $E$ (subject) affects $Y$ (absolute stroke volume) via baseline physiology, violating Setting~\ref{setting:anti-causal}. We apply the combined MIR-VIR objective from Appendix~\ref{app:MIR_VIR} to account for both mean and covariance shifts. The result (Figure~\ref{fig:vital_noCT} in Appendix~\ref{app:violated_assumptions}) shows that MIR-VIR still improves over the baselines, but the gap is substantially smaller than under subject-centering. This is consistent with our expectation: the regularization continues to help by penalizing sensitivity to between-subject variation in $X$, but it cannot eliminate the bias introduced by direct effects of the environment on $Y$.

\section{Conclusion and future work}\label{sec:conclusion}
We presented a framework for anti-causal learning that leverages unlabeled multi-environment data to improve domain generalization. Our proposed methods have worst-case optimality guarantees and admit closed-form estimators. Experiments on a real-world physical system and a biomedical application demonstrate the effectiveness of our approach. This work opens avenues for improving domain generalization in settings where obtaining labels from multiple environments is difficult but unlabeled data are abundant.

Our methods focus on the regression setting (although they can be extended to other losses; see Section~\ref{sec:extension}), with experiments (Section~\ref{sec:vital_db}) on real-world distributional shifts rather than synthetic ones. We believe that domain generalization in regression settings remains comparatively under-explored relative to classification, and hope that the experimental setups we consider in this paper can serve as a useful benchmark for regression-based domain generalization.

We close by outlining several promising directions for future work. (i) Extending MIR/VIR-style regularization to nonparametric methods (e.g., via kernel methods) or to tree-based methods (e.g., gradient-boosted trees) is an interesting open problem. (ii) Under a linear model class with mean squared error loss, only the first two moments of the environment perturbation affect the test loss. For nonlinear models and other losses, higher-moment shifts may also affect the test loss, and extending our framework to account for these (e.g., via a distributional metric) is a natural extension. (iii) Deriving finite-sample generalization bounds that explicitly trade off the number of labeled and unlabeled environments would strengthen the theoretical foundation.

\section*{Acknowledgements}
 We thank Maria Cervera, Antoine Wehenkel, and Achille Nazaret for helpful discussions and feedback.

\section*{Impact Statement}

This paper presents work towards making machine learning models more robust and safe for deployment under distribution shift. By developing methods that leverage unlabeled data to improve worst-case performance in unseen environments, our contributions have potential positive societal impact in applications where reliability is critical.

We do not foresee ethical concerns with this work. The methods are general-purpose tools for robust prediction, the datasets used in all experiments have been peer-reviewed and are publicly available, and the computational requirements are minimal.

\bibliography{refs}

\begin{thebibliography}{41}
\providecommand{\natexlab}[1]{#1}
\providecommand{\url}[1]{\texttt{#1}}
\expandafter\ifx\csname urlstyle\endcsname\relax
  \providecommand{\doi}[1]{doi: #1}\else
  \providecommand{\doi}{doi: \begingroup \urlstyle{rm}\Url}\fi

\bibitem[Arjovsky et~al.(2019)Arjovsky, Bottou, Gulrajani, and Lopez-Paz]{Arjovsky2019}
Arjovsky, M., Bottou, L., Gulrajani, I., and Lopez-Paz, D.
\newblock Invariant risk minimization.
\newblock \emph{arXiv preprint arXiv:1907.02893}, 2019.

\bibitem[Benemerito et~al.(2024)Benemerito, Melis, Wehenkel, and Marzo]{Benemerito_2024}
Benemerito, I., Melis, A., Wehenkel, A., and Marzo, A.
\newblock openbf: an open-source finite volume 1d blood flow solver.
\newblock \emph{Physiological Measurement}, 2024.

\bibitem[Blanchard et~al.(2011)Blanchard, Lee, and Scott]{blanchard2011generalizing}
Blanchard, G., Lee, G., and Scott, C.
\newblock Generalizing from several related classification tasks to a new unlabeled sample.
\newblock In \emph{Advances in neural information processing systems}, volume~24, 2011.

\bibitem[B{\"u}hlmann(2020)]{buhlmann2020invariance}
B{\"u}hlmann, P.
\newblock Invariance, causality and robustness.
\newblock \emph{Statistical Science}, 35\penalty0 (3):\penalty0 404--426, 2020.

\bibitem[B{\"u}hlmann \& Meinshausen(2015)B{\"u}hlmann and Meinshausen]{buhlmann2015magging}
B{\"u}hlmann, P. and Meinshausen, N.
\newblock Magging: maximin aggregation for inhomogeneous large-scale data.
\newblock \emph{Proceedings of the IEEE}, 104\penalty0 (1):\penalty0 126--135, 2015.

\bibitem[DeGrave et~al.(2021)DeGrave, Janizek, and Lee]{degrave2021ai}
DeGrave, A.~J., Janizek, J.~D., and Lee, S.-I.
\newblock Ai for radiographic covid-19 detection selects shortcuts over signal.
\newblock \emph{Nature Machine Intelligence}, 3\penalty0 (7):\penalty0 610--619, 2021.

\bibitem[Eastwood et~al.(2023)Eastwood, Singh, Nicolicioiu, Vlastelica~Pogan{\v{c}}i{\'c}, von K{\"u}gelgen, and Sch{\"o}lkopf]{eastwood2023spuriosity}
Eastwood, C., Singh, S., Nicolicioiu, A.~L., Vlastelica~Pogan{\v{c}}i{\'c}, M., von K{\"u}gelgen, J., and Sch{\"o}lkopf, B.
\newblock Spuriosity didn’t kill the classifier: Using invariant predictions to harness spurious features.
\newblock \emph{Advances in Neural Information Processing Systems}, 36:\penalty0 18291--18324, 2023.

\bibitem[Freni et~al.(2025)Freni, Fries, K{\"u}hne, Reichstein, and Peters]{freni2025maximum}
Freni, F., Fries, A., K{\"u}hne, L., Reichstein, M., and Peters, J.
\newblock Maximum risk minimization with random forests.
\newblock \emph{arXiv preprint arXiv:2512.10445}, 2025.

\bibitem[Gamella et~al.(2025)Gamella, Peters, and B{\"u}hlmann]{gamella2025causal}
Gamella, J.~L., Peters, J., and B{\"u}hlmann, P.
\newblock Causal chambers as a real-world physical testbed for ai methodology.
\newblock \emph{Nature Machine Intelligence}, 7\penalty0 (1):\penalty0 107--118, 2025.

\bibitem[Heinze-Deml \& Meinshausen(2021)Heinze-Deml and Meinshausen]{heinze2021conditional}
Heinze-Deml, C. and Meinshausen, N.
\newblock Conditional variance penalties and domain shift robustness.
\newblock \emph{Machine Learning}, 110\penalty0 (2):\penalty0 303--348, 2021.

\bibitem[Heinze-Deml et~al.(2018)Heinze-Deml, Peters, and Meinshausen]{HeinzeDeml2017}
Heinze-Deml, C., Peters, J., and Meinshausen, N.
\newblock Invariant causal prediction for nonlinear models.
\newblock \emph{Journal of Causal Inference}, 6\penalty0 (2):\penalty0 1--35, 2018.

\bibitem[Hu et~al.(2018)Hu, Niu, Sato, and Sugiyama]{hu2018does}
Hu, W., Niu, G., Sato, I., and Sugiyama, M.
\newblock Does distributionally robust supervised learning give robust classifiers?
\newblock In \emph{International Conference on Machine Learning}, pp.\  2029--2037. PMLR, 2018.

\bibitem[Jiang \& Veitch(2022)Jiang and Veitch]{jiang2022invariant}
Jiang, Y. and Veitch, V.
\newblock Invariant and transportable representations for anti-causal domain shifts.
\newblock \emph{Advances in Neural Information Processing Systems}, 35:\penalty0 20782--20794, 2022.

\bibitem[Kook et~al.(2022)Kook, Sick, and B{\"u}hlmann]{kook2022distributional}
Kook, L., Sick, B., and B{\"u}hlmann, P.
\newblock Distributional anchor regression.
\newblock \emph{Statistics and Computing}, 32\penalty0 (3):\penalty0 39, 2022.

\bibitem[Krueger et~al.(2021)Krueger, Caballero, Jacobsen, Zhang, Binas, Zhang, Le~Priol, and Courville]{krueger2021out}
Krueger, D., Caballero, E., Jacobsen, J.-H., Zhang, A., Binas, J., Zhang, D., Le~Priol, R., and Courville, A.
\newblock Out-of-distribution generalization via risk extrapolation (rex).
\newblock In \emph{International conference on machine learning}, pp.\  5815--5826. PMLR, 2021.

\bibitem[Lee et~al.(2022)Lee, Park, Yoon, Yang, Park, and Jung]{lee2022vitaldb}
Lee, H.-C., Park, Y., Yoon, S.~B., Yang, S.~M., Park, D., and Jung, C.-W.
\newblock Vitaldb, a high-fidelity multi-parameter vital signs database in surgical patients.
\newblock \emph{Scientific Data}, 9\penalty0 (1):\penalty0 279, 2022.

\bibitem[Lueckmann et~al.(2017)Lueckmann, Goncalves, Bassetto, {\"O}cal, Nonnenmacher, and Macke]{lueckmann2017flexible}
Lueckmann, J.-M., Goncalves, P.~J., Bassetto, G., {\"O}cal, K., Nonnenmacher, M., and Macke, J.~H.
\newblock Flexible statistical inference for mechanistic models of neural dynamics.
\newblock \emph{Advances in neural information processing systems}, 30, 2017.

\bibitem[Magliacane et~al.(2018)Magliacane, Van~Ommen, Claassen, Bongers, Versteeg, and Mooij]{magliacane2018domain}
Magliacane, S., Van~Ommen, T., Claassen, T., Bongers, S., Versteeg, P., and Mooij, J.~M.
\newblock Domain adaptation by using causal inference to predict invariant conditional distributions.
\newblock \emph{Advances in neural information processing systems}, 31, 2018.

\bibitem[Makar et~al.(2022)Makar, Packer, Moldovan, Blalock, Halpern, and D’Amour]{pmlr-v151-makar22a}
Makar, M., Packer, B., Moldovan, D., Blalock, D., Halpern, Y., and D’Amour, A.
\newblock Causally motivated shortcut removal using auxiliary labels.
\newblock In \emph{International Conference on Artificial Intelligence and Statistics}, pp.\  739--766. PMLR, 2022.

\bibitem[Manduchi et~al.(2024)Manduchi, Wehenkel, Behrmann, Pegolotti, Miller, Sener, Cuturi, Sapiro, and Jacobsen]{manduchi2024leveraging}
Manduchi, L., Wehenkel, A., Behrmann, J., Pegolotti, L., Miller, A.~C., Sener, O., Cuturi, M., Sapiro, G., and Jacobsen, J.-H.
\newblock Leveraging cardiovascular simulations for in-vivo prediction of cardiac biomarkers.
\newblock \emph{arXiv preprint arXiv:2412.17542}, 2024.

\bibitem[Meinshausen \& B{\"u}hlmann(2015)Meinshausen and B{\"u}hlmann]{MeinshausenMiniMax}
Meinshausen, N. and B{\"u}hlmann, P.
\newblock Maximin effects in inhomogeneous large-scale data.
\newblock \emph{Annals of Statistics}, 43\penalty0 (4):\penalty0 1801--1830, 2015.

\bibitem[Muandet et~al.(2013)Muandet, Balduzzi, and Sch{\"o}lkopf]{muandet2013domain}
Muandet, K., Balduzzi, D., and Sch{\"o}lkopf, B.
\newblock Domain generalization via invariant feature representation.
\newblock In \emph{Proceedings of the 30th International Conference on Machine Learning}, pp.\  10--18. PMLR, 2013.

\bibitem[Palumbo et~al.(2025)Palumbo, Saengkyongam, Cervera, Behrmann, Miller, Sapiro, Heinze-Deml, and Wehenkel]{palumbo2025hybrid}
Palumbo, E., Saengkyongam, S., Cervera, M.~R., Behrmann, J., Miller, A.~C., Sapiro, G., Heinze-Deml, C., and Wehenkel, A.
\newblock Hybrid modeling of photoplethysmography for non-invasive monitoring of cardiovascular parameters.
\newblock \emph{arXiv preprint arXiv:2511.14452}, 2025.

\bibitem[Pearl(2009)]{Pearl2009}
Pearl, J.
\newblock \emph{Causality: Models, Reasoning, and Inference}.
\newblock Cambridge University Press, New York, USA, 2nd edition, 2009.

\bibitem[Peters et~al.(2016)Peters, B{\"u}hlmann, and Meinshausen]{peters2016causal}
Peters, J., B{\"u}hlmann, P., and Meinshausen, N.
\newblock Causal inference by using invariant prediction: identification and confidence intervals.
\newblock \emph{Journal of the Royal Statistical Society. Series B (Statistical Methodology)}, pp.\  947--1012, 2016.

\bibitem[Pfister et~al.(2021)Pfister, Williams, Peters, Aebersold, and B{\"u}hlmann]{pfister2021stabilizing}
Pfister, N., Williams, E.~G., Peters, J., Aebersold, R., and B{\"u}hlmann, P.
\newblock Stabilizing variable selection and regression.
\newblock \emph{The Annals of Applied Statistics}, 15\penalty0 (3):\penalty0 1220--1246, 2021.

\bibitem[Rojas-Carulla et~al.(2018)Rojas-Carulla, Sch{\"o}lkopf, Turner, and Peters]{rojas2018invariant}
Rojas-Carulla, M., Sch{\"o}lkopf, B., Turner, R., and Peters, J.
\newblock Invariant models for causal transfer learning.
\newblock \emph{The Journal of Machine Learning Research}, 19\penalty0 (1):\penalty0 1309--1342, 2018.

\bibitem[Rosenfeld et~al.(2022)Rosenfeld, Ravikumar, and Risteski]{rosenfeld2022domain}
Rosenfeld, E., Ravikumar, P., and Risteski, A.
\newblock Domain-adjusted regression or: Erm may already learn features sufficient for out-of-distribution generalization.
\newblock \emph{arXiv preprint arXiv:2202.06856}, 2022.

\bibitem[Rothenh{\"a}usler et~al.(2021)Rothenh{\"a}usler, Meinshausen, B{\"u}hlmann, and Peters]{rothenhausler2021anchor}
Rothenh{\"a}usler, D., Meinshausen, N., B{\"u}hlmann, P., and Peters, J.
\newblock Anchor regression: Heterogeneous data meet causality.
\newblock \emph{Journal of the Royal Statistical Society: Series B (Statistical Methodology)}, 83\penalty0 (2):\penalty0 215--246, 2021.

\bibitem[Saengkyongam et~al.(2022)Saengkyongam, Henckel, Pfister, and Peters]{saengkyongam2022exploiting}
Saengkyongam, S., Henckel, L., Pfister, N., and Peters, J.
\newblock Exploiting independent instruments: Identification and distribution generalization.
\newblock In \emph{International Conference on Machine Learning}, pp.\  18935--18958. PMLR, 2022.

\bibitem[Saengkyongam et~al.(2023)Saengkyongam, Thams, Peters, and Pfister]{saengkyongam2021invariant}
Saengkyongam, S., Thams, N., Peters, J., and Pfister, N.
\newblock Invariant policy learning: A causal perspective.
\newblock \emph{IEEE transactions on pattern analysis and machine intelligence}, 45\penalty0 (7):\penalty0 8606--8620, 2023.

\bibitem[Saengkyongam et~al.(2024{\natexlab{a}})Saengkyongam, Pfister, Klasnja, Murphy, and Peters]{saengkyongam2024effect}
Saengkyongam, S., Pfister, N., Klasnja, P., Murphy, S., and Peters, J.
\newblock Effect-invariant mechanisms for policy generalization.
\newblock \emph{Journal of Machine Learning Research}, 25\penalty0 (34):\penalty0 1--36, 2024{\natexlab{a}}.

\bibitem[Saengkyongam et~al.(2024{\natexlab{b}})Saengkyongam, Rosenfeld, Ravikumar, Pfister, and Peters]{saengkyongam2024identifying}
Saengkyongam, S., Rosenfeld, E., Ravikumar, P.~K., Pfister, N., and Peters, J.
\newblock Identifying representations for intervention extrapolation.
\newblock In \emph{International Conference on Learning Representations}, 2024{\natexlab{b}}.

\bibitem[Sagawa et~al.(2020)Sagawa, Koh, Hashimoto, and Liang]{sagawa2020distributionally}
Sagawa, S., Koh, P.~W., Hashimoto, T.~B., and Liang, P.
\newblock Distributionally robust neural networks for group shifts: On the importance of regularization for worst-case generalization.
\newblock In \emph{International Conference on Learning Representations}, 2020.

\bibitem[Saugel et~al.(2021)Saugel, Kouz, Scheeren, Greiwe, Hoppe, Romagnoli, and de~Backer]{saugel2021cardiac}
Saugel, B., Kouz, K., Scheeren, T.~W., Greiwe, G., Hoppe, P., Romagnoli, S., and de~Backer, D.
\newblock Cardiac output estimation using pulse wave analysis—physiology, algorithms, and technologies: a narrative review.
\newblock \emph{British journal of anaesthesia}, 126\penalty0 (1):\penalty0 67--76, 2021.

\bibitem[Sch{\"o}lkopf et~al.(2012)Sch{\"o}lkopf, Janzing, Peters, Sgouritsa, Zhang, and Mooij]{scholkopf2012causal}
Sch{\"o}lkopf, B., Janzing, D., Peters, J., Sgouritsa, E., Zhang, K., and Mooij, J.
\newblock On causal and anticausal learning.
\newblock \emph{Proceedings of the 29th International Conference on Machine Learning}, pp.\  459--466, 2012.

\bibitem[Shen et~al.(2026)Shen, B{\"u}hlmann, and Taeb]{shen2025causality}
Shen, X., B{\"u}hlmann, P., and Taeb, A.
\newblock Causality-oriented robustness: exploiting general noise interventions in linear structural causal models.
\newblock \emph{Journal of the American Statistical Association (to appear)}, pp.\  1--20, 2026.

\bibitem[{\v{S}}ola et~al.(2025){\v{S}}ola, B{\"u}hlmann, and Shen]{vsola2025causality}
{\v{S}}ola, M., B{\"u}hlmann, P., and Shen, X.
\newblock Causality-inspired robustness for nonlinear models via representation learning.
\newblock \emph{arXiv preprint arXiv:2505.12868}, 2025.

\bibitem[Subbaswamy \& Saria(2020)Subbaswamy and Saria]{subbaswamy2020development}
Subbaswamy, A. and Saria, S.
\newblock From development to deployment: dataset shift, causality, and shift-stable models in health ai.
\newblock \emph{Biostatistics}, 21\penalty0 (2):\penalty0 345--352, 2020.

\bibitem[Veitch et~al.(2021)Veitch, D'Amour, Yadlowsky, and Eisenstein]{veitch2021counterfactual}
Veitch, V., D'Amour, A., Yadlowsky, S., and Eisenstein, J.
\newblock Counterfactual invariance to spurious correlations in text classification.
\newblock \emph{Advances in neural information processing systems}, 34:\penalty0 16196--16208, 2021.

\bibitem[von K{\"u}gelgen et~al.(2025)von K{\"u}gelgen, Ketterer, Shen, Meinshausen, and Peters]{von2025representation}
von K{\"u}gelgen, J., Ketterer, J., Shen, X., Meinshausen, N., and Peters, J.
\newblock Representation learning for distributional perturbation extrapolation.
\newblock \emph{arXiv preprint arXiv:2504.18522}, 2025.

\end{thebibliography}
\bibliographystyle{icml2026}

\newpage
\appendix
\onecolumn

\section{Proofs}\label{app:proofs}

Throughout this section, we work under Setting~\ref{setting:anti-causal}. We use the notation introduced in Section~\ref{sec:method}. 

For the proofs, we additionally define for any environment $e \in \calE$:
\begin{itemize}
    \item $M^e_\varepsilon \coloneqq \EX[\varepsilon_e \varepsilon_e^\top]$, the second moment matrix of the perturbation,
    \item $\bar{M}_\varepsilon \coloneqq \frac{1}{|\calL|} \sum_{e \in \calL} M^{e}_\varepsilon$, the average second moment across labeled environments.
\end{itemize}

\subsection{Preliminary Lemma}

The following lemma provides a decomposition of the mean squared error that separates environment-invariant and environment-specific components.

\begin{lemma}[MSE Decomposition]\label{lem:mse_decomposition}
Under Setting~\ref{setting:anti-causal}, for any $\beta \in \R^d$ and any environment $e \in \calE$,
\[
\EX^e[(Y - \beta^\top X)^2] = R_0(\beta) + \beta^\top M^e_\varepsilon \beta,
\]
where $R_0(\beta) \coloneqq \EX[(Y - \beta^\top f_0(Y, U, \varepsilon_X))^2]$ does not depend on the environment $e$.
\end{lemma}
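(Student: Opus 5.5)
Substitute the structural equation for $X$ from the SCM~\eqref{eq:scm_e} into the squared error and expand. Write $Z \coloneqq f_0(Y, U, \varepsilon_X)$, so that in environment $e$ we have $X = Z + \varepsilon_e$. Then
\[
Y - \beta^\top X = (Y - \beta^\top Z) - \beta^\top \varepsilon_e,
\]
and squaring gives
\[
(Y - \beta^\top Z)^2 - 2 (Y - \beta^\top Z)\,\beta^\top \varepsilon_e + \beta^\top \varepsilon_e \varepsilon_e^\top \beta.
\]
Take $\EX^e$ of each of the three terms separately.

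\textbf{First term.} The pair $(Y, Z)$ is a measurable function of $(\varepsilon_U, \varepsilon_Y, \varepsilon_X)$ alone, since $Y = g_0(U,\varepsilon_Y)$ with $U = \varepsilon_U$, and $Z = f_0(Y, U, \varepsilon_X)$. The joint law of these noises does not depend on $e$, so the law of $(Y,Z)$ is the same in every environment. Hence $\EX^e[(Y - \beta^\top Z)^2] = R_0(\beta)$, which is environment-free.

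\textbf{Third term.} By definition, $\EX^e[\beta^\top \varepsilon_e \varepsilon_e^\top \beta] = \beta^\top M^e_\varepsilon \beta$.

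\textbf{Cross term (main step).} Because $\varepsilon_e \ci (\varepsilon_U, \varepsilon_Y, \varepsilon_X)$, the perturbation $\varepsilon_e$ is independent of $Y - \beta^\top Z$. The expectation therefore factorizes as
\[
\EX^e\big[(Y - \beta^\top Z)\,\beta^\top\varepsilon_e\big] = \EX[Y - \beta^\top Z] \cdot \beta^\top \EX[\varepsilon_e].
\]
By the zero-mean convention in Setting~\ref{setting:anti-causal}, $\EX[Y] = \EX[g_0(U,\varepsilon_Y)] = 0$ and $\EX[Z] = 0$. So the first factor vanishes and the cross term is zero, even though $\EX[\varepsilon_e]$ itself may be nonzero. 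This is the only point where both the independence assumption and the centering assumption are used.

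\textbf{Integrability.} To justify the expansion, I would assume (implicitly, as the statement does) that all terms are finite, e.g.\ finite second moments of $Y$, $Z$ and $\varepsilon_e$. The factorization of the cross term then follows from independence together with Cauchy--Schwarz integrability. If some second moment is infinite, both sides of the identity are $+\infty$ for the relevant $\beta$, so the identity still holds in the extended sense.

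Adding the three terms gives $\EX^e[(Y - \beta^\top X)^2] = R_0(\beta) + \beta^\top M^e_\varepsilon \beta$, as claimed.
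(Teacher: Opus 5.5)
Your proposal is correct and matches the paper's proof essentially step for step. Like the paper, you substitute $X = Z + \varepsilon_e$, identify the first term with $R_0(\beta)$, kill the cross term by factorizing it via independence and then applying the zero-mean convention, and recognize the third term as $\beta^\top M^e_\varepsilon \beta$. Your integrability remark goes slightly beyond the paper, which leaves finiteness of the second moments implicit.
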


\begin{proof}
Under Setting~\ref{setting:anti-causal}, we have $X = f_0(Y, U, \varepsilon_X) + \varepsilon_e$. Define $Z \coloneqq f_0(Y, U, \varepsilon_X)$. Expanding the squared error:
\begin{align*}
\EX^e[(Y - \beta^\top X)^2] 
&= \EX[(Y - \beta^\top Z - \beta^\top \varepsilon_e)^2] \\
&= \underbrace{\EX[(Y - \beta^\top Z)^2]}_{\text{(I)}} - 2\underbrace{\EX[(Y - \beta^\top Z) \cdot \beta^\top \varepsilon_e]}_{\text{(II)}} + \underbrace{\EX[(\beta^\top \varepsilon_e)^2]}_{\text{(III)}}.
\end{align*}

We analyze each term separately.

\textit{Term (I):} This equals $R_0(\beta)$ by definition. It depends only on the joint distribution of $(\varepsilon_Y, \varepsilon_U, \varepsilon_X)$, which is invariant across environments by Setting~\ref{setting:anti-causal}.

\textit{Term (II):} By the independence of $\varepsilon_e$ from $(\varepsilon_Y, \varepsilon_U, \varepsilon_X)$ stated in Setting~\ref{setting:anti-causal}, we can factorize the expectation:
\[
\EX[(Y - \beta^\top Z) \cdot \beta^\top \varepsilon_e] = \EX[Y - \beta^\top Z] \cdot \EX[\beta^\top \varepsilon_e].
\]
By Setting~\ref{setting:anti-causal}, $\EX[Z] = \EX[f_0(Y, U, \varepsilon_X)] = 0$ and $\EX[Y] = \EX[g_0(U, \varepsilon_Y)] = 0$. Therefore, $\EX[Y - \beta^\top Z] = 0$, and Term (II) vanishes.

\textit{Term (III):} This equals $\EX[(\beta^\top \varepsilon_e)^2] = \beta^\top \EX[\varepsilon_e \varepsilon_e^\top] \beta = \beta^\top M^e_\varepsilon \beta$.

Combining these results:
\[
\EX^e[(Y - \beta^\top X)^2] = R_0(\beta) + \beta^\top M^e_\varepsilon \beta. \qedhere
\]
\end{proof}

\subsection{Proof of Theorem~\ref{thm:MIR}}

Recall that $\mu^{\tr}_{\varepsilon} \coloneqq \begin{bmatrix} \EX[\varepsilon_{e_1}] & \cdots & \EX[\varepsilon_{e_p}] \end{bmatrix} \in \R^{d \times p}$ denotes the matrix of perturbation means across all training environments $\calE^{\tr} = \{e_1, \ldots, e_p\}$.

\begin{proof}
The proof proceeds in three steps.

\medskip
\noindent\textit{Step 1: Express the MSE for environments in $\calE^\diamond_\gamma$.}

By Lemma~\ref{lem:mse_decomposition}, for any environment $e \in \calE$,
\[
\EX^e[(Y - \beta^\top X)^2] = R_0(\beta) + \beta^\top M^e_\varepsilon \beta.
\]

\medskip
\noindent\textit{Step 2: Compute the supremum over $\calE^\diamond_\gamma$.}

We claim that
\begin{equation}\label{eq:sup_equivalence_MIR}
\sup_{e \in \calE^\diamond_\gamma} \beta^\top M^e_\varepsilon \beta = \beta^\top \bar{M}_\varepsilon \beta + \sup_{A \in \Delta^\diamond_\gamma} \beta^\top A \beta.
\end{equation}

To establish~\eqref{eq:sup_equivalence_MIR}, we show that $\{M^e_\varepsilon - \bar{M}_\varepsilon : e \in \calE^\diamond_\gamma\} = \Delta^\diamond_\gamma$:
\begin{itemize}
    \item By definition of $\calE^\diamond_\gamma$, for every $e \in \calE^\diamond_\gamma$, we have $M^e_\varepsilon - \bar{M}_\varepsilon \in \Delta^\diamond_\gamma$.
    \item Conversely, for every $A \in \Delta^\diamond_\gamma$, the matrix $\bar{M}_\varepsilon + A$ is positive semi-definite (as the sum of two positive semi-definite matrices). Since $\calE$ contains all environments whose perturbation distribution ranges over all probability distributions on $\R^d$, there exists $e \in \calE$ with $M^e_\varepsilon = \bar{M}_\varepsilon + A$. This $e$ belongs to $\calE^\diamond_\gamma$ by definition, and hence $A \in \{M^e_\varepsilon - \bar{M}_\varepsilon : e \in \calE^\diamond_\gamma\}$.
\end{itemize}
This set equality implies:
\begin{align*}
\sup_{e \in \calE^\diamond_\gamma} \beta^\top M^e_\varepsilon \beta 
&= \sup_{e \in \calE^\diamond_\gamma} \beta^\top (\bar{M}_\varepsilon + (M^e_\varepsilon - \bar{M}_\varepsilon)) \beta \\
&= \beta^\top \bar{M}_\varepsilon \beta + \sup_{e \in \calE^\diamond_\gamma} \beta^\top (M^e_\varepsilon - \bar{M}_\varepsilon) \beta \\
&= \beta^\top \bar{M}_\varepsilon \beta + \sup_{A \in \Delta^\diamond_\gamma} \beta^\top A \beta,
\end{align*}
where the last equality uses $\{M^e_\varepsilon - \bar{M}_\varepsilon : e \in \calE^\diamond_\gamma\} = \Delta^\diamond_\gamma$.

Therefore,
\[
\sup_{e \in \calE^\diamond_\gamma} \EX^e[(Y - \beta^\top X)^2] = R_0(\beta) + \sup_{e \in \calE^\diamond_\gamma} \beta^\top M^e_\varepsilon \beta = R_0(\beta) + \beta^\top \bar{M}_\varepsilon \beta + \sup_{A \in \Delta^\diamond_\gamma} \beta^\top A \beta.
\]

We now evaluate $\sup_{A \in \Delta^\diamond_\gamma} \beta^\top A \beta$.

\textit{Upper bound:} For any $A \in \Delta^\diamond_\gamma$, the constraint $0 \preceq A \preceq \gamma \Var(\mu^{\tr}_{\varepsilon})$ implies
\[
\beta^\top A \beta \leq \gamma \beta^\top \Var(\mu^{\tr}_{\varepsilon}) \beta.
\]

\textit{Attainability:} Define $A^* \coloneqq \gamma \Var(\mu^{\tr}_{\varepsilon})$. We verify that $A^* \in \Delta^\diamond_\gamma$:
\begin{itemize}
    \item $A^*$ is symmetric since $\Var(\mu^{\tr}_{\varepsilon})$ is a covariance matrix.
    \item $A^*$ is positive semi-definite since $\Var(\mu^{\tr}_{\varepsilon})$ is positive semi-definite.
    \item $A^* \preceq \gamma \Var(\mu^{\tr}_{\varepsilon})$ holds with equality.
\end{itemize}
Therefore, $A^* \in \Delta^\diamond_\gamma$, and the upper bound is achieved:
\begin{equation}\label{eq:sup_A_MIR}
\sup_{A \in \Delta^\diamond_\gamma} \beta^\top A \beta = \gamma \beta^\top \Var(\mu^{\tr}_{\varepsilon}) \beta.
\end{equation}

Combining the above,
\begin{equation}\label{eq:worst_case_mse_MIR}
\sup_{e \in \calE^\diamond_\gamma} \EX^e[(Y - \beta^\top X)^2] = R_0(\beta) + \beta^\top \bar{M}_\varepsilon \beta + \gamma \beta^\top \Var(\mu^{\tr}_{\varepsilon}) \beta.
\end{equation}

\medskip
\noindent\textit{Step 3: Relate to the MIR objective.}

We compute the average training MSE over labeled environments. By Lemma~\ref{lem:mse_decomposition},
\begin{align}
\EX^{\tr}[(Y - \beta^\top X)^2] &= \frac{1}{|\calL|} \sum_{e \in \calL} \EX^{e}[(Y - \beta^\top X)^2] \nonumber \\
&= \frac{1}{|\calL|} \sum_{e \in \calL} \left( R_0(\beta) + \beta^\top M^{e}_\varepsilon \beta \right) \nonumber \\
&= R_0(\beta) + \beta^\top \bar{M}_\varepsilon \beta. \label{eq:training_mse_MIR}
\end{align}

Substituting~\eqref{eq:training_mse_MIR} into~\eqref{eq:worst_case_mse_MIR},
\begin{equation}\label{eq:worst_case_equals_MIR}
\sup_{e \in \calE^\diamond_\gamma} \EX^e[(Y - \beta^\top X)^2] = \EX^{\tr}[(Y - \beta^\top X)^2] + \gamma \beta^\top \Var(\mu^{\tr}_{\varepsilon}) \beta.
\end{equation}

Finally, we show that $K = \mu^{\tr}_{\varepsilon}$. Under Setting~\ref{setting:anti-causal}, for each training environment $e_i$,
\[
\EX^{e_i}[X] = \EX[f_0(Y, U, \varepsilon_X) + \varepsilon_{e_i}] = \EX[f_0(Y, U, \varepsilon_X)] + \EX[\varepsilon_{e_i}].
\]
Since $\EX[f_0(Y, U, \varepsilon_X)] = 0$ by Setting~\ref{setting:anti-causal}, we have $\EX^{e_i}[X] = \EX[\varepsilon_{e_i}]$. Therefore,
\[
K = \begin{bmatrix} \EX^{e_1}[X] & \cdots & \EX^{e_p}[X] \end{bmatrix} = \begin{bmatrix} \EX[\varepsilon_{e_1}] & \cdots & \EX[\varepsilon_{e_p}] \end{bmatrix} = \mu^{\tr}_{\varepsilon}.
\]

Since $K = \mu^{\tr}_{\varepsilon}$, we have $\Var(K) = \Var(\mu^{\tr}_{\varepsilon})$. Substituting into~\eqref{eq:worst_case_equals_MIR},
\[
\sup_{e \in \calE^\diamond_\gamma} \EX^e[(Y - \beta^\top X)^2] = \EX^{\tr}[(Y - \beta^\top X)^2] + \gamma \, \beta^\top \Var(K) \beta.
\]

The right-hand side is precisely the MIR objective~\eqref{eq:MIR_objective}. Since the equality holds for all $\beta \in \R^d$, minimizing over $\beta$ on both sides yields
\[
\beta^{\textup{MIR}}_\gamma = \argmin_{\beta \in \R^d} \left\{ \EX^{\tr}[(Y - \beta^\top X)^2] + \gamma \, \beta^\top \Var(K) \beta \right\} = \argmin_{\beta \in \R^d} \sup_{e \in \calE^\diamond_\gamma} \EX^e[(Y - \beta^\top X)^2].
\]
\end{proof}

\subsection{Proof of Theorem~\ref{thm:VIR}}

Recall the notation from Section~\ref{sec:method}: for each environment $e_i \in \calE^{\tr}$, $G^{e_i}_X$ denotes the covariance matrix of $X$, $G^{e_i}_\varepsilon$ denotes the covariance matrix of $\varepsilon_{e_i}$, and $\bar{G}_X \coloneqq \frac{1}{p}\sum_{i=1}^p G^{e_i}_X$ and $\bar{G}_\varepsilon \coloneqq \frac{1}{p}\sum_{i=1}^p G^{e_i}_\varepsilon$ denote their averages across all training environments.

\begin{lemma}\label{lem:V_psd}
The matrix $V_\varepsilon \coloneqq \frac{1}{p} \sum_{i=1}^{p} (G^{e_i}_X - \bar{G}_X)^2$ is symmetric and positive semi-definite.
\end{lemma}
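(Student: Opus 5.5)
The plan is to prove both properties directly from the definition, term by term, and then use that symmetry and positive semi-definiteness are preserved under nonnegative linear combinations. Fix $i \in \{1,\dots,p\}$ and set $D_i \coloneqq G^{e_i}_X - \bar{G}_X$.

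First, each $G^{e_i}_X = \Var^{e_i}(X)$ is a covariance matrix, hence symmetric. The average $\bar{G}_X$ is therefore symmetric, and so is $D_i$. Finiteness of the second moments is implicitly assumed for these covariances to exist. Since $D_i^\top = D_i$, the square $D_i^2 = D_i D_i = D_i^\top D_i$ is symmetric.

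For positive semi-definiteness, take any $v \in \R^d$. Then
\[
v^\top D_i^2 v = v^\top D_i^\top D_i v = \|D_i v\|_2^2 \geq 0,
\]
so $D_i^2 \succeq 0$. Note that $D_i$ itself is generally indefinite, since it is a difference of covariances. The key point is therefore to use the symmetry of $D_i$ to rewrite the square as a Gram matrix, rather than to argue about $D_i$ directly.

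Finally, $V_\varepsilon = \frac{1}{p}\sum_{i=1}^p D_i^2$ is a sum of symmetric positive semi-definite matrices with positive weight $1/p$. Since the PSD cone is closed under nonnegative combinations,
\[
v^\top V_\varepsilon v = \frac{1}{p}\sum_{i=1}^p \|D_i v\|_2^2 \geq 0
\]
for all $v$, which gives the claim.

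None of these steps is a genuine obstacle. The only point requiring care is not to assume $D_i \succeq 0$ and to rely on the $D_i^\top D_i$ factorization instead. As a remark for the subsequent VIR proof, the same argument applies verbatim to $\frac{1}{p}\sum_i (G^{e_i}_\varepsilon - \bar{G}_\varepsilon)^2$. Moreover, under Setting~\ref{setting:anti-causal}, $G^{e}_X = \Var(f_0(Y,U,\varepsilon_X)) + G^{e}_\varepsilon$ by independence, so $G^{e_i}_X - \bar{G}_X = G^{e_i}_\varepsilon - \bar{G}_\varepsilon$ and the two matrices coincide.
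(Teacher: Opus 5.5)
Your proposal is correct and follows the paper's proof: symmetry of each deviation $G^{e_i}_X - \bar{G}_X$, positive semi-definiteness of its square, and closure of the PSD cone under sums. The only difference is that you spell out the identity $v^\top D_i^2 v = \|D_i v\|_2^2$, which the paper leaves implicit.
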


\begin{proof}
Each $G^{e_i}_X - \bar{G}_X$ is symmetric (as a difference of covariance matrices). The square of a symmetric matrix is symmetric and positive semi-definite. The sum of positive semi-definite matrices is positive semi-definite.
\end{proof}

\begin{proof}[Proof of Theorem~\ref{thm:VIR}]
The proof follows the same structure as the proof of Theorem~\ref{thm:MIR}.

\medskip
\noindent\textit{Step 1: Express the MSE for environments in $\calE^\dagger_\gamma$.}

By Lemma~\ref{lem:mse_decomposition}, for any environment $e \in \calE$,
\[
\EX^e[(Y - \beta^\top X)^2] = R_0(\beta) + \beta^\top M^e_\varepsilon \beta.
\]

\medskip
\noindent\textit{Step 2: Compute the supremum over $\calE^\dagger_\gamma$.}

We claim that
\begin{equation}\label{eq:sup_equivalence_VIR}
\sup_{e \in \calE^\dagger_\gamma} \beta^\top M^e_\varepsilon \beta = \beta^\top \bar{M}_\varepsilon \beta + \sup_{A \in \Delta^\dagger_\gamma} \beta^\top A \beta.
\end{equation}

To establish~\eqref{eq:sup_equivalence_VIR}, we show that $\{M^e_\varepsilon - \bar{M}_\varepsilon : e \in \calE^\dagger_\gamma\} = \Delta^\dagger_\gamma$:
\begin{itemize}
    \item By definition of $\calE^\dagger_\gamma$, for every $e \in \calE^\dagger_\gamma$, we have $M^e_\varepsilon - \bar{M}_\varepsilon \in \Delta^\dagger_\gamma$.
    \item Conversely, for every $A \in \Delta^\dagger_\gamma$, the matrix $\bar{M}_\varepsilon + A$ is positive semi-definite (as the sum of two positive semi-definite matrices). Since $\calE$ contains all environments whose perturbation distribution ranges over all probability distributions on $\R^d$, there exists $e \in \calE$ with $M^e_\varepsilon = \bar{M}_\varepsilon + A$. This $e$ belongs to $\calE^\dagger_\gamma$ by definition, and hence $A \in \{M^e_\varepsilon - \bar{M}_\varepsilon : e \in \calE^\dagger_\gamma\}$.
\end{itemize}
This set equality implies:
\begin{align*}
\sup_{e \in \calE^\dagger_\gamma} \beta^\top M^e_\varepsilon \beta 
&= \sup_{e \in \calE^\dagger_\gamma} \beta^\top (\bar{M}_\varepsilon + (M^e_\varepsilon - \bar{M}_\varepsilon)) \beta \\
&= \beta^\top \bar{M}_\varepsilon \beta + \sup_{e \in \calE^\dagger_\gamma} \beta^\top (M^e_\varepsilon - \bar{M}_\varepsilon) \beta \\
&= \beta^\top \bar{M}_\varepsilon \beta + \sup_{A \in \Delta^\dagger_\gamma} \beta^\top A \beta,
\end{align*}
where the last equality uses $\{M^e_\varepsilon - \bar{M}_\varepsilon : e \in \calE^\dagger_\gamma\} = \Delta^\dagger_\gamma$.

Therefore,
\[
\sup_{e \in \calE^\dagger_\gamma} \EX^e[(Y - \beta^\top X)^2] = R_0(\beta) + \sup_{e \in \calE^\dagger_\gamma} \beta^\top M^e_\varepsilon \beta = R_0(\beta) + \beta^\top \bar{M}_\varepsilon \beta + \sup_{A \in \Delta^\dagger_\gamma} \beta^\top A \beta.
\]

We now evaluate $\sup_{A \in \Delta^\dagger_\gamma} \beta^\top A \beta$.

\textit{Upper bound:} For any $A \in \Delta^\dagger_\gamma$, the constraint $0 \preceq A \preceq \gamma V_\varepsilon$ implies
\[
\beta^\top A \beta \leq \gamma \beta^\top V_\varepsilon \beta.
\]

\textit{Attainability:} Define $A^* \coloneqq \gamma V_\varepsilon$. By Lemma~\ref{lem:V_psd}, $V_\varepsilon$ is symmetric and positive semi-definite, so $A^*$ is as well. The constraint $A^* \preceq \gamma V_\varepsilon$ holds with equality. Therefore, $A^* \in \Delta^\dagger_\gamma$, and the upper bound is achieved:
\begin{equation}\label{eq:sup_A_VIR}
\sup_{A \in \Delta^\dagger_\gamma} \beta^\top A \beta = \gamma \beta^\top V_\varepsilon \beta.
\end{equation}

Combining the above,
\begin{equation}\label{eq:worst_case_mse_VIR}
\sup_{e \in \calE^\dagger_\gamma} \EX^e[(Y - \beta^\top X)^2] = R_0(\beta) + \beta^\top \bar{M}_\varepsilon \beta + \gamma \beta^\top V_\varepsilon \beta.
\end{equation}

\medskip
\noindent\textit{Step 3: Relate to the VIR objective.}

By the same calculation as in~\eqref{eq:training_mse_MIR},
\begin{equation}\label{eq:training_mse_VIR}
\EX^{\tr}[(Y - \beta^\top X)^2] = \frac{1}{|\calL|} \sum_{e \in \calL} \EX^{e}[(Y - \beta^\top X)^2] = R_0(\beta) + \beta^\top \bar{M}_\varepsilon \beta.
\end{equation}

Substituting~\eqref{eq:training_mse_VIR} into~\eqref{eq:worst_case_mse_VIR},
\begin{equation}\label{eq:worst_case_equals_VIR_intermediate}
\sup_{e \in \calE^\dagger_\gamma} \EX^e[(Y - \beta^\top X)^2] = \EX^{\tr}[(Y - \beta^\top X)^2] + \gamma \beta^\top V_\varepsilon \beta.
\end{equation}

It remains to show that $V_\varepsilon = \frac{1}{p} \sum_{i=1}^{p} (G^{e_i}_X - \bar{G}_X)^2$.

Under Setting~\ref{setting:anti-causal}, we have $X = Z + \varepsilon_e$ where $Z \coloneqq f_0(Y, U, \varepsilon_X)$. Since $\varepsilon_e$ is independent of $(\varepsilon_Y, \varepsilon_U, \varepsilon_X)$, the covariance of $X$ in environment $e_i$ decomposes as:
\begin{equation}\label{eq:cov_X_decomp}
G^{e_i}_X = G_Z + G^{e_i}_\varepsilon,
\end{equation}
where $G_Z \coloneqq \Var(Z)$ denotes the covariance of $Z$, which does not depend on the environment.

Taking the average across all training environments,
\[
\bar{G}_X = \frac{1}{p} \sum_{i=1}^{p} G^{e_i}_X = G_Z + \bar{G}_\varepsilon.
\]

Therefore, for each training environment $e_i$,
\begin{equation}\label{eq:G_X_centered}
G^{e_i}_X - \bar{G}_X = G^{e_i}_\varepsilon - \bar{G}_\varepsilon.
\end{equation}

Consequently,
\begin{equation}\label{eq:V_equality}
\frac{1}{p} \sum_{i=1}^{p} (G^{e_i}_X - \bar{G}_X)^2 = \frac{1}{p} \sum_{i=1}^{p} (G^{e_i}_\varepsilon - \bar{G}_\varepsilon)^2 = V_\varepsilon.
\end{equation}

Substituting into~\eqref{eq:worst_case_equals_VIR_intermediate},
\[
\sup_{e \in \calE^\dagger_\gamma} \EX^e[(Y - \beta^\top X)^2] = \EX^{\tr}[(Y - \beta^\top X)^2] + \gamma \beta^\top \left( \frac{1}{p} \sum_{i=1}^{p} (G^{e_i}_X - \bar{G}_X)^2 \right) \beta.
\]

The right-hand side is precisely the VIR objective~\eqref{eq:VIR_objective}. Since the equality holds for all $\beta \in \R^d$, minimizing over $\beta$ on both sides yields
\[
\beta^{\textup{VIR}}_\gamma = \argmin_{\beta \in \R^d} \left\{ \EX^{\tr}[(Y - \beta^\top X)^2] + \gamma \beta^\top \left( \frac{1}{p} \sum_{i=1}^{p} (G^{e_i}_X - \bar{G}_X)^2 \right) \beta \right\} = \argmin_{\beta \in \R^d} \sup_{e \in \calE^\dagger_\gamma} \EX^e[(Y - \beta^\top X)^2].
\]
\end{proof}

\subsection{Proof of Proposition~\ref{prop:consistency}}\label{proof:prop:consistency}

\begin{proof}
Recall the assumptions of Proposition~\ref{prop:consistency}:
\begin{enumerate}[label=(\roman*)]
    \item $\EX^{e}[\|X\|_2^2] < \infty$ for all $e \in \calE^{\tr}$,
    \item $\EX^{e}[Y^2] < \infty$ for all $e \in \calL$,
    \item $\EX^{\tr}[XX^\top] + \gamma H$ is non-singular.
\end{enumerate}

For each labeled environment $e \in \calL$, let $k_e$ denote the number of samples from environment $e$, so that $k = \sum_{e \in \calL} k_e$. We assume balanced sampling, i.e., $k_e / k \to 1/|\calL|$ for each $e \in \calL$ (the proof for unbalanced sampling follows analogously with $\EX^{\tr}[\cdot]$ defined as the corresponding weighted average). The finite-sample estimator admits the closed-form solution:
\[
\hat{\beta}_{\gamma} = \left( \frac{1}{k}\mathbf{X}^\top \mathbf{X} + \gamma \hat{H} \right)^{-1} \left( \frac{1}{k}\mathbf{X}^\top \mathbf{Y} \right).
\]

We establish the convergence in probability of each component.

\begin{enumerate}
    \item \textbf{Labeled data convergence:} The sample second moment from the labeled data can be written as:
    \[
    \frac{1}{k} \mathbf{X}^\top \mathbf{X} = \sum_{e \in \calL} \frac{k_e}{k} \left( \frac{1}{k_e} \sum_{i: E_i = e} X_i X_i^\top \right).
    \]
    By assumption (i), $\EX^{e}[\|X\|_2^2] < \infty$ for all $e \in \calL \subseteq \calE^{\tr}$, which implies $\EX^{e}[XX^\top]$ has finite entries. The Weak Law of Large Numbers then implies that for each $e \in \calL$,
    \[
    \frac{1}{k_e} \sum_{i: E_i = e} X_i X_i^\top \xrightarrow{p} \EX^{e}[X X^\top] \quad \text{as } k_e \to \infty.
    \]
    Since $k_e / k \to 1/|\calL|$ by the balanced sampling assumption, it follows that:
    \[
    \frac{1}{k} \mathbf{X}^\top \mathbf{X} \xrightarrow{p} \frac{1}{|\calL|} \sum_{e \in \calL} \EX^{e}[X X^\top] = \EX^{\tr}[X X^\top].
    \]
    
    Similarly, by assumptions (i) and (ii), the Weak Law of Large Numbers then yields:
    \[
    \frac{1}{k} \mathbf{X}^\top \mathbf{Y} \xrightarrow{p} \EX^{\tr}[XY].
    \]

    \item \textbf{Unlabeled data convergence:} By assumption (i), $\EX^{e}[\|X\|_2^2] < \infty$ for all $e \in \calE^{\tr}$, which ensures that the first and second moments of $X$ are finite in each training environment. The Weak Law of Large Numbers implies that for each environment $e_i \in \calE^{\tr}$, as $n_i \to \infty$:
    \begin{itemize}
        \item For MIR: $\hat{\mu}^{e_i}_X \xrightarrow{p} \EX^{e_i}[X]$, and hence $\hat{K} \xrightarrow{p} K$.
        \item For VIR: $\hat{G}^{e_i}_X \xrightarrow{p} G^{e_i}_X$, and hence $\hat{\bar{G}}_X \xrightarrow{p} \bar{G}_X$.
    \end{itemize}
    The matrix $\hat{H}$ is a continuous function of these environment-specific moments: for MIR, $\hat{H} = \Var(\hat{K})$, and for VIR, $\hat{H} = \frac{1}{p}\sum_{i=1}^{p}(\hat{G}^{e_i}_X - \hat{\bar{G}}_X)^2$. By the Continuous Mapping Theorem, $\hat{H} \xrightarrow{p} H$.

    \item \textbf{Combined convergence:} By the properties of convergence in probability (specifically, that convergence in probability is preserved under addition),
    \[
    \frac{1}{k}\mathbf{X}^\top \mathbf{X} + \gamma \hat{H} \xrightarrow{p} \EX^{\tr}[X X^\top] + \gamma H.
    \]
    By assumption (iii), $\EX^{\tr}[X X^\top] + \gamma H$ is non-singular. Since matrix inversion is continuous at non-singular matrices, a final application of the Continuous Mapping Theorem yields:
    \[
    \hat{\beta}_{\gamma} \xrightarrow{p} (\EX^{\tr}[X X^\top] + \gamma H)^{-1} \EX^{\tr}[XY] = \beta_{\gamma}.
    \]
\end{enumerate}
This concludes the proof.
\end{proof}

\section{Derivation of VIR under shared eigenbasis}\label{app:VIR_shared_eigenbasis}

We derive the simplification of the VIR regularization term under the assumption that the perturbation covariance matrices $\{G^{e_i}_{\varepsilon}\}_{i=1}^{p}$ share a common eigenbasis.

Suppose there exists an orthogonal matrix $Q \in \R^{d \times d}$ such that for all $i \in \{1, \ldots, p\}$,
\[
G^{e_i}_{\varepsilon} = Q \Lambda^{e_i} Q^\top,
\]
where $\Lambda^{e_i} = \diag(\lambda^{e_i}_1, \ldots, \lambda^{e_i}_d)$ with $\lambda^{e_i}_j \geq 0$ for all $j$.

Under Setting~\ref{setting:anti-causal}, we have $X = Z + \varepsilon_e$ where $Z := f_0(Y, U, \varepsilon_X)$. Since $\varepsilon_e$ is independent of $(Y, U, \varepsilon_X)$, the covariance of $X$ in environment $e_i$ decomposes as
\[
G^{e_i}_X = G_Z + G^{e_i}_{\varepsilon},
\]
where $G_Z := \Var(Z)$ does not depend on the environment (see the proof of Theorem~\ref{thm:VIR}). Therefore,
\[
G^{e_i}_X - \bar{G}_X = G^{e_i}_{\varepsilon} - \bar{G}_{\varepsilon}.
\]

The average perturbation covariance matrix is
\[
\bar{G}_{\varepsilon} = \frac{1}{p} \sum_{i=1}^{p} G^{e_i}_{\varepsilon} = \frac{1}{p} \sum_{i=1}^{p} Q \Lambda^{e_i} Q^\top = Q \left( \frac{1}{p} \sum_{i=1}^{p} \Lambda^{e_i} \right) Q^\top = Q \bar{\Lambda} Q^\top,
\]
where $\bar{\Lambda} = \diag(\bar{\lambda}_1, \ldots, \bar{\lambda}_d)$ and $\bar{\lambda}_j = \frac{1}{p} \sum_{i=1}^{p} \lambda^{e_i}_j$.

The deviation from the average is
\[
G^{e_i}_{\varepsilon} - \bar{G}_{\varepsilon} = Q \Lambda^{e_i} Q^\top - Q \bar{\Lambda} Q^\top = Q (\Lambda^{e_i} - \bar{\Lambda}) Q^\top.
\]

Since $\Lambda^{e_i} - \bar{\Lambda} = \diag(\lambda^{e_i}_1 - \bar{\lambda}_1, \ldots, \lambda^{e_i}_d - \bar{\lambda}_d)$ is diagonal, its square is
\[
(\Lambda^{e_i} - \bar{\Lambda})^2 = \diag\left( (\lambda^{e_i}_1 - \bar{\lambda}_1)^2, \ldots, (\lambda^{e_i}_d - \bar{\lambda}_d)^2 \right).
\]

Therefore,
\[
(G^{e_i}_{\varepsilon} - \bar{G}_{\varepsilon})^2 = Q (\Lambda^{e_i} - \bar{\Lambda}) Q^\top Q (\Lambda^{e_i} - \bar{\Lambda}) Q^\top = Q (\Lambda^{e_i} - \bar{\Lambda})^2 Q^\top,
\]
where we used the orthogonality of $Q$, i.e., $Q^\top Q = I$.

Averaging over environments,
\begin{align*}
\frac{1}{p} \sum_{i=1}^{p} (G^{e_i}_{\varepsilon} - \bar{G}_{\varepsilon})^2 &= \frac{1}{p} \sum_{i=1}^{p} Q (\Lambda^{e_i} - \bar{\Lambda})^2 Q^\top \\
&= Q \left( \frac{1}{p} \sum_{i=1}^{p} (\Lambda^{e_i} - \bar{\Lambda})^2 \right) Q^\top \\
&= Q \diag(\sigma^2_1, \ldots, \sigma^2_d) Q^\top,
\end{align*}
where
\[
\sigma^2_j = \frac{1}{p} \sum_{i=1}^{p} (\lambda^{e_i}_j - \bar{\lambda}_j)^2
\]
is the variance of the $j$-th eigenvalue across environments.

Since $G^{e_i}_X - \bar{G}_X = G^{e_i}_{\varepsilon} - \bar{G}_{\varepsilon}$, we have
\[
\frac{1}{p} \sum_{i=1}^{p} (G^{e_i}_X - \bar{G}_X)^2 = \frac{1}{p} \sum_{i=1}^{p} (G^{e_i}_{\varepsilon} - \bar{G}_{\varepsilon})^2 = Q \diag(\sigma^2_1, \ldots, \sigma^2_d) Q^\top.
\]

The VIR regularization term is therefore
\begin{align*}
\beta^\top \left( \frac{1}{p} \sum_{i=1}^{p} (G^{e_i}_X - \bar{G}_X)^2 \right) \beta &= \beta^\top Q \diag(\sigma^2_1, \ldots, \sigma^2_d) Q^\top \beta \\
&= \tilde{\beta}^\top \diag(\sigma^2_1, \ldots, \sigma^2_d) \tilde{\beta} \\
&= \sum_{j=1}^{d} \sigma^2_j (\tilde{\beta}_j)^2,
\end{align*}
where $\tilde{\beta} = Q^\top \beta$ denotes the coefficients in the shared eigenbasis.

This shows that VIR applies regularization proportional to $\sigma^2_j$ along the $j$-th eigenvector direction. Directions with larger variance in eigenvalues across environments receive stronger regularization, encouraging the model to rely on directions where the covariance structure is more stable.

\section{Comparison of VIR formulations}\label{app:VIR_alternative}

We compare two formulations of the VIR regularization. Let $A_i \coloneqq G^{e_i}_X - \bar{G}_X$ for $i \in \{1, \ldots, p\}$ denote the deviation of the covariance matrix in environment $e_i$ from the average. Note that each $A_i$ is symmetric.

Current VIR as in Section~\ref{sec:VIR}:
\[
R_{\text{current}}(\beta) \coloneqq \beta^\top \left( \frac{1}{p} \sum_{i=1}^{p} A_i^2 \right) \beta = \frac{1}{p} \sum_{i=1}^{p} \beta^\top A_i^2 \beta = \frac{1}{p} \sum_{i=1}^{p} \|A_i \beta\|_2^2,
\]
where the last equality follows from the symmetry of $A_i$.

Alternative VIR:
\[
R_{\text{alt}}(\beta) \coloneqq \frac{1}{p} \sum_{i=1}^{p} \left( \Var^{e_i}(\beta^\top X) - \frac{1}{p} \sum_{i=1}^p \Var^{e_i}(\beta^\top X) \right)^2 = \frac{1}{p} \sum_{i=1}^{p} \left( \beta^\top A_i \beta \right)^2.
\]

The key difference is that $R_{\text{current}}$ penalizes the norm of $A_i \beta$, while $R_{\text{alt}}$ penalizes the square of the quadratic form $\beta^\top A_i \beta$. Since $A_i$ is typically indefinite (having both positive and negative eigenvalues), there may exist directions $\beta$ for which the positive and negative contributions cancel, yielding $\beta^\top A_i \beta = 0$ even though $A_i \beta \neq 0$. Such directions receive no regularization under $R_{\text{alt}}$ but are regularized under $R_{\text{current}}$.

\paragraph{Example: The risk of alternative VIR.}

Consider $d = 2$ covariates and $p = 2$ environments with the following covariance matrices:
\[
G^{e_1}_X = \begin{pmatrix} 2 & 0 \\ 0 & 1 \end{pmatrix}, \quad G^{e_2}_X = \begin{pmatrix} 1 & 0 \\ 0 & 2 \end{pmatrix}.
\]
The average is:
\[
\bar{G}_X = \frac{1}{2}(G^{e_1}_X + G^{e_2}_X) = \begin{pmatrix} 1.5 & 0 \\ 0 & 1.5 \end{pmatrix},
\]
and the deviations are:
\[
A_1 = G^{e_1}_X - \bar{G}_X = \begin{pmatrix} 0.5 & 0 \\ 0 & -0.5 \end{pmatrix}, \quad A_2 = G^{e_2}_X - \bar{G}_X = \begin{pmatrix} -0.5 & 0 \\ 0 & 0.5 \end{pmatrix}.
\]

\textbf{Alternative VIR.} Consider $\beta = (1, 1)^\top$. We compute:
\[
\beta^\top A_1 \beta = \begin{pmatrix} 1 & 1 \end{pmatrix} \begin{pmatrix} 0.5 & 0 \\ 0 & -0.5 \end{pmatrix} \begin{pmatrix} 1 \\ 1 \end{pmatrix} = \begin{pmatrix} 1 & 1 \end{pmatrix} \begin{pmatrix} 0.5 \\ -0.5 \end{pmatrix} = 0.5 - 0.5 = 0.
\]
Similarly, $\beta^\top A_2 \beta = -0.5 + 0.5 = 0$. Thus:
\[
R_{\text{alt}}(\beta) = \frac{1}{2}\left(0^2 + 0^2\right) = 0,
\]
and $\beta = (1, 1)^\top$ receives no regularization under the alternative VIR.

\textbf{Current VIR.} For the same $\beta = (1, 1)^\top$, we compute:
\[
A_1 \beta = \begin{pmatrix} 0.5 & 0 \\ 0 & -0.5 \end{pmatrix} \begin{pmatrix} 1 \\ 1 \end{pmatrix} = \begin{pmatrix} 0.5 \\ -0.5 \end{pmatrix}, \quad \|A_1 \beta\|_2^2 = 0.5^2 + (-0.5)^2 = 0.5.
\]
Similarly, $A_2 \beta = (-0.5, 0.5)^\top$ and $\|A_2 \beta\|_2^2 = 0.5$. Thus:
\[
R_{\text{current}}(\beta) = \frac{1}{2}(0.5 + 0.5) = 0.5 > 0,
\]
and $\beta = (1, 1)^\top$ is regularized under the current VIR.

\textbf{Test environment.} Now consider a test environment with:
\[
G^{e_{\text{test}}}_X = \begin{pmatrix} 3 & 0 \\ 0 & 1 \end{pmatrix}, \quad A_{\text{test}} = G^{e_{\text{test}}}_X - \bar{G}_X = \begin{pmatrix} 1.5 & 0 \\ 0 & -0.5 \end{pmatrix}.
\]
For $\beta = (1, 1)^\top$:
\[
\beta^\top A_{\text{test}} \beta = \begin{pmatrix} 1 & 1 \end{pmatrix} \begin{pmatrix} 1.5 \\ -0.5 \end{pmatrix} = 1.5 - 0.5 = 1 \neq 0.
\]
The predictor $\beta = (1, 1)^\top$ that was unpenalized by the alternative VIR is sensitive to the test environment shift. In contrast, the current VIR regularizes this $\beta$, encouraging robustness to such shifts.

\paragraph{Summary.}

The current VIR imposes a strictly stronger invariance condition than the alternative VIR in the sense that $R_{\text{current}}(\beta) = 0$ implies $R_{\text{alt}}(\beta) = 0$, but not vice versa. The alternative VIR is vulnerable to cancellation effects when $A_i$ is indefinite, allowing directions $\beta$ with $\beta^\top A_i \beta = 0$ but $A_i \beta \neq 0$ to escape regularization. The current VIR avoids this issue by penalizing $\|A_i \beta\|_2^2$, which is non-negative for each environment. This ensures that no cancellation can occur across environments: the only way to achieve $R_{\text{current}}(\beta) = 0$ is to have $A_i \beta = 0$ for all $i$.

\section{Combined MIR-VIR}\label{app:MIR_VIR}

In practice, environment perturbations may affect both the means and covariances. We can combine both regularization terms in a single objective:
\begin{equation}\label{eq:MIR_VIR_objective}
\begin{aligned}
\beta^{\text{MIR-VIR}}_{\gamma_1, \gamma_2} = \argmin_{\beta \in \R^d} \; &\EX^{\tr}[(Y - \beta^\top X)^2] + \gamma_1 \beta^\top \Var(K)\beta \\
&+ \gamma_2 \beta^\top \left(\frac{1}{p}\sum_{i=1}^p (G^{e_i}_X - \bar{G}_X)^2\right) \beta.
\end{aligned}
\end{equation}
The separate regularization parameters $\gamma_1$ and $\gamma_2$ allow for different degrees of robustness to mean and covariance shifts, respectively.

\section{Extension: Setting~\ref{setting:anti-causal} with observed covariates}\label{app:mediator_extension}
We discuss a potential relaxation of Setting~\ref{setting:anti-causal} in which the environment $E$ may influence $Y$ (or $U$) through observed covariates $W$.

\begin{setting}\label{setting:extend_scm}
We consider the SCM \\
\begin{minipage}{0.49\columnwidth}
\begin{equation*}
\calS(e):
\begin{cases}
W \coloneqq h_0(\varepsilon_e) \\
U \coloneqq u_0(W, \varepsilon_U) \\
Y \coloneqq g_0(W, U, \varepsilon_Y) \\
X \coloneqq f_0(Y, U, W, \varepsilon_X) + \ell_0(\varepsilon_e),
\end{cases}
\end{equation*}
\end{minipage}%
\quad
\begin{minipage}{0.6\columnwidth}
 \begin{tikzpicture}[scale=1.3, node distance=1.0cm, roundnode/.style={circle, draw, inner sep=3pt,minimum size=7mm}, squarenode/.style={rectangle, draw, inner sep=1pt, minimum size=6mm}] \node[squarenode] (E) at (-2, 0.8){$e$}; \node[roundnode] (X) at (-1, 0){$X$}; \node[roundnode] (W) at (-1, 1.5){$W$}; \node[roundnode] (Y) at (1, 0){$Y$}; \node[roundnode][fill=black!25] (U) at (0, 0.8){$U$}; \tikzstyle{EdgeStyle}=[line width=1, -Latex] \Edge[label=$f_0$](Y)(X) \Edge[labelstyle={pos=0.42}](E)(X) \tikzstyle{EdgeStyle}=[bend left=20, line width=1, -Latex, dashed] \Edge[](U)(Y) \tikzstyle{EdgeStyle}=[bend right=20, line width=1, -Latex, dashed] \Edge[](U)(X) \tikzstyle{EdgeStyle}=[line width=1, -Latex] \Edge[](E)(W) \tikzstyle{EdgeStyle}=[line width=1, -Latex] \Edge[](W)(X) \tikzstyle{EdgeStyle}=[bend left=40, line width=1, -Latex] \Edge[](W)(Y) \end{tikzpicture}
\end{minipage}\vspace{0.2cm}\\
where $W \in \R^m$ are observed covariates and $\varepsilon_e$ is the environment-specific perturbation. 
\end{setting}

Conditioning on $W = w$, $Y$ becomes independent of $\varepsilon_e$, and Setting~\ref{setting:extend_scm} reduces to Setting~\ref{setting:anti-causal} with $\varepsilon_e$ replaced by $\ell_0(\varepsilon_e)$. We can therefore extend MIR and VIR to Setting~\ref{setting:extend_scm} by conditioning on $W$: for example, we may discretize $W$ into strata, apply MIR and VIR within each stratum.

In healthcare, $W$ could be demographic variables (e.g., age, sex) or site-specific protocols that mediate the effect of the environment on patient outcomes. We leave a full theoretical and empirical study of this extension to future work.

\section{Stress-tests on violations of causal assumptions}\label{app:violated_assumptions}

\begin{figure}[!t]
    \centering
    \begin{minipage}{.7\linewidth}
    \centering
    \begin{subfigure}[b]{1\columnwidth}
        \includegraphics[width=\columnwidth]{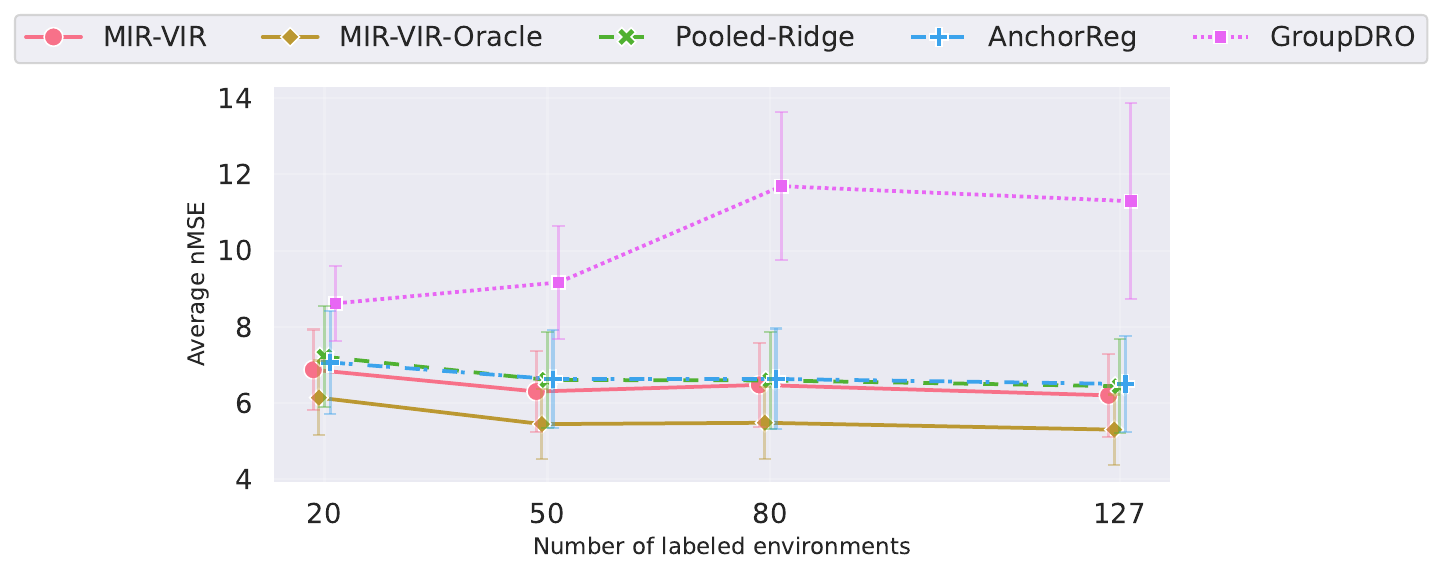}
    \end{subfigure}
    \begin{subfigure}[b]{0.725\columnwidth}
        \includegraphics[width=\columnwidth]{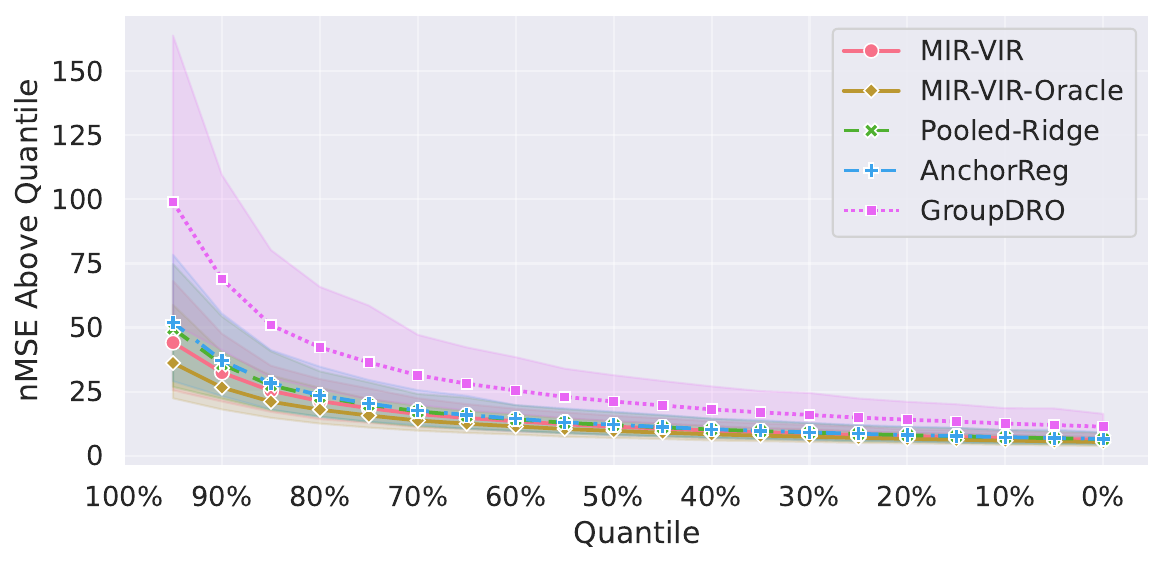}
        \subcaption{VitalDB results without subject-centering}
        \label{fig:vital_noCT}
    \end{subfigure}
    \end{minipage}%
    \begin{minipage}{.5\linewidth}
    \begin{subfigure}[b]{0.6\columnwidth}
        \includegraphics[width=\columnwidth]{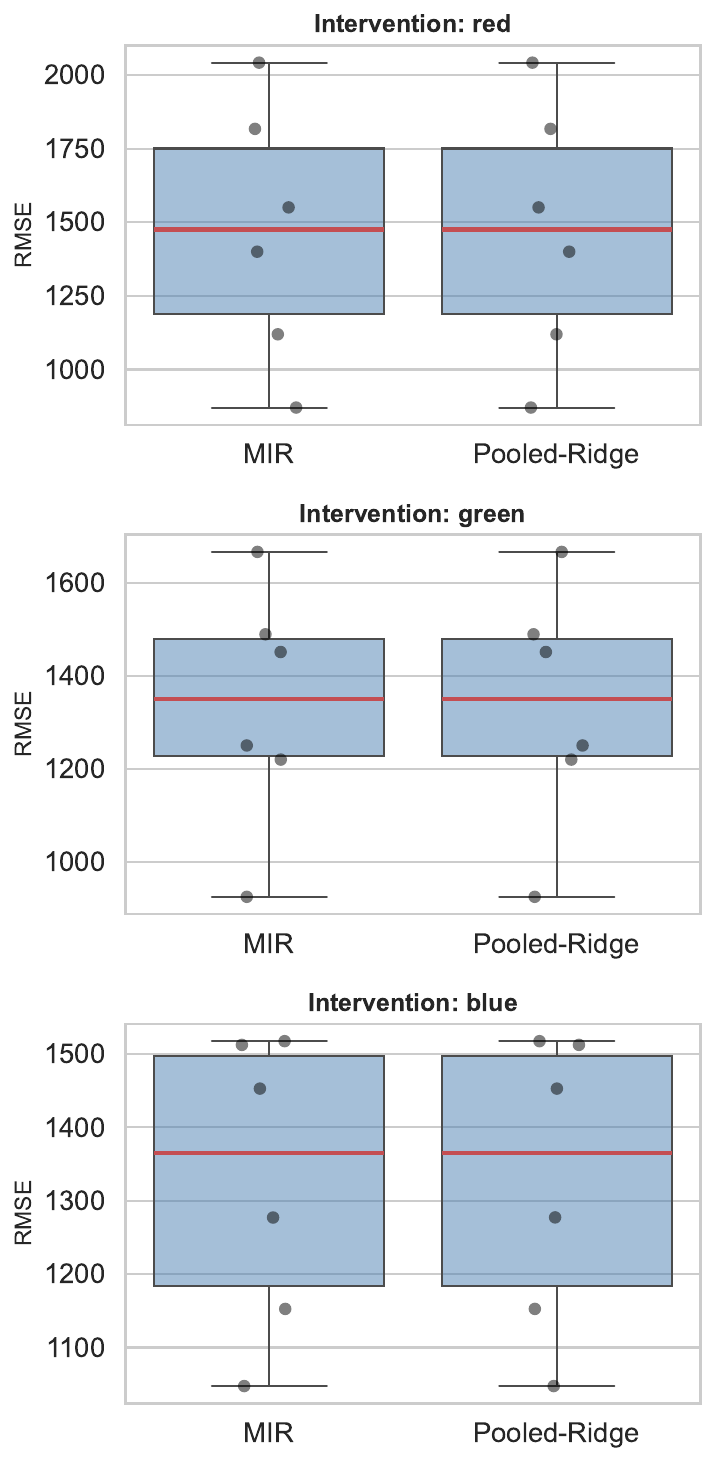}
        \subcaption{Reverse prediction direction on the Causal Chamber
        }
        \label{fig:chamber_reverse}
    \end{subfigure}
    \end{minipage}%
\caption{Stress-tests on violations of causal assumptions}
\label{fig:stress_test}
\end{figure}

\paragraph{Causal Chamber: reverse prediction direction.}
We use the same Light Tunnel setup as in Section~\ref{sec:causal_chamber} and Appendix~\ref{app:light_tunnel}: the same six environments per outcome-intervention pair, constructed by binning the intervention variable, and the same leave-one-environment-out evaluation protocol. The only change is the prediction direction: we now predict a light-intensity measurement (\texttt{ir\_1}) from the RGB brightness settings, rather than predicting a brightness setting from the sensor measurements. Since the brightness settings causally affect the sensor readings, this configuration violates the anti-causal assumption. For simplicity, we restrict to the fully-labeled regime $n_{\text{labeled}} = 5$, so all training environments have labels. We compare \texttt{MIR} against \texttt{Pooled-Ridge} using the same hyperparameter grid and selection procedure described in Appendix~\ref{app:light_tunnel}. Results are shown in Figure~\ref{fig:chamber_reverse}.

\paragraph{VitalDB without subject-centering.}
We use the same VitalDB setup as in Section~\ref{sec:vital_db} and Appendix~\ref{app:vitaldb}: the same 128 subjects, the same 140-dimensional APW embeddings as predictors, the same leave-one-subject-out evaluation protocol with $n_{\text{labeled}} \in \{20, 50, 80, 127\}$, and the same nMSE metric. The only change is that we omit the subject-centering preprocessing step, so $X$ and $Y$ retain their absolute values. In this setting, baseline physiological differences across subjects induce a direct effect of the environment $E$ on the outcome $Y$, violating Setting~\ref{setting:anti-causal}. To account for both mean and covariance shifts that may now be present, we use the combined MIR-VIR objective (see Appendix~\ref{app:MIR_VIR}), with separate regularization parameters $\gamma_1$ and $\gamma_2$ each searched over $\{10^2, 10^3, 10^4, 10^5\}$ and selected via leave-one-subject-out cross-validation on the labeled training subjects. Results are shown in Figure~\ref{fig:vital_noCT}.

\section{Additional experiments on VitalDB}\label{app:additional_experiments}

\begin{figure}[!t]
    \centering
    \begin{minipage}{.6\linewidth}
    \centering
    \begin{subfigure}[b]{1\columnwidth}
        \includegraphics[width=\columnwidth]{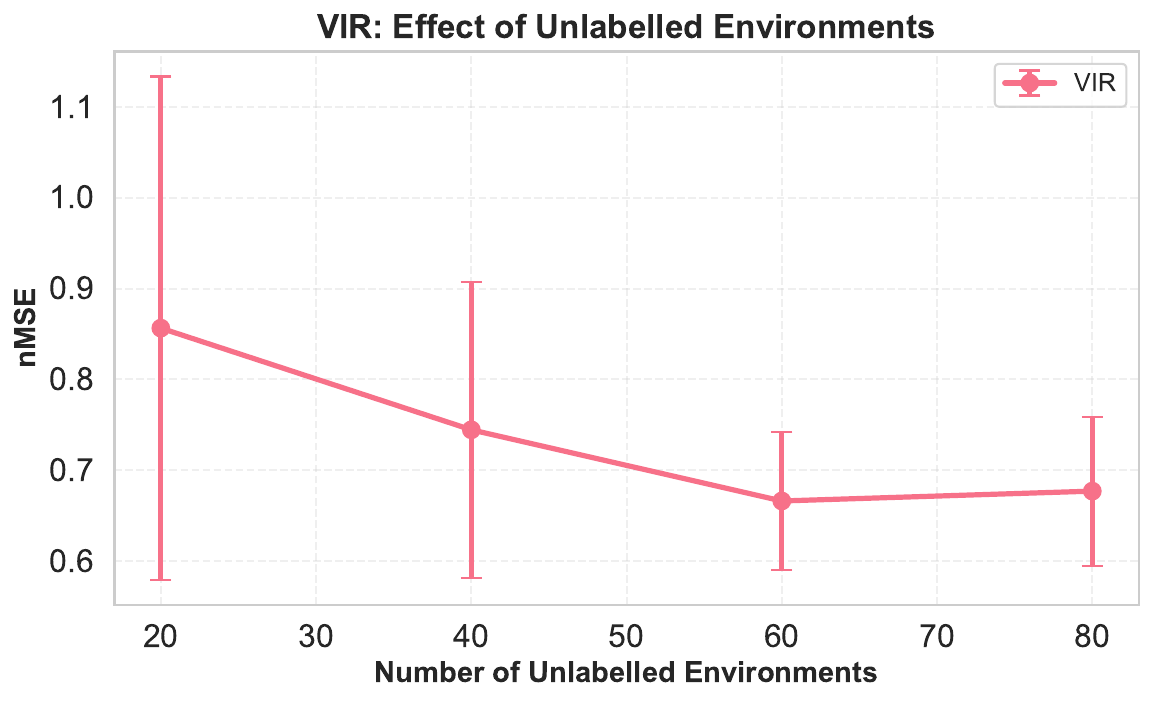}
    \end{subfigure}
    \end{minipage}%
\caption{Effect of unlabelled environments in VitalDB}
\label{fig:effect_unlabel}
\end{figure}

\paragraph{Effect of the number of unlabeled environments.} Fixing the number of labeled subjects at 20, we vary the number of additional unlabeled subjects from 0 to 80. As shown Figure~\ref{fig:effect_unlabel}, VIR's test nMSE decreases monotonically as more unlabeled subjects are added, providing direct evidence that unlabeled multi-environment data carries useful information for our regularization.

\section{Light Tunnel: Further experimental details}
\label{app:light_tunnel}

We use the Light Tunnel dataset \citep{gamella2025causal}, a controlled optical system designed for benchmarking machine learning methods. The system consists of RGB light sources and light-intensity sensors placed along a tunnel. The brightness settings of the light sources causally influence the sensor readings, providing a known causal ground truth.

\subsection{Data Description}

\paragraph{Predictors.} Each data point consists of 6 sensor readings:
\begin{itemize}
    \item Three infrared sensors: \texttt{ir\_1}, \texttt{ir\_2}, \texttt{ir\_3}
    \item Three visible light sensors: \texttt{vis\_1}, \texttt{vis\_2}, \texttt{vis\_3}
\end{itemize}
These sensor readings serve as predictors $X \in \mathbb{R}^6$.

\paragraph{Outcomes.} We consider three prediction tasks corresponding to the brightness settings of the three color channels: \texttt{red}, \texttt{green}, and \texttt{blue}. For each task, the outcome $Y \in \mathbb{R}$ is the brightness setting of the corresponding color channel.

\subsection{Experimental Design}

\paragraph{Outcome-Intervention Configurations.} For each outcome variable, we use data from the dataset's reference environment and intervention environments where the outcome is not directly intervened upon.

We test 6 configurations combining 3 outcome variables with 2 intervention variables each:
\begin{itemize}
    \item Outcome \textbf{red}: Use data from reference + blue/green intervention environments
    \item Outcome \textbf{blue}: Use data from reference + red/green intervention environments
    \item Outcome \textbf{green}: Use data from reference + red/blue intervention environments
\end{itemize}

This yields configurations: (red, blue), (red, green), (blue, red), (blue, green), (green, red), (green, blue), where the first element is the outcome and the second is the intervention variable used for environment discretization.

\paragraph{Environment Construction.} For each configuration, we construct 6 discrete environments as follows:
\begin{enumerate}
    \item Pool data from the reference environment and the corresponding intervention environments.
    \item Partition the pooled data into 6 bins based on the value of the intervention variable using uniform binning (equal-width intervals).
    \item Assign each data point to one of 6 environments $e_1, \ldots, e_6$ based on its bin.
\end{enumerate}
By construction, these environments differ in the mean of the intervention variable, which induces mean shifts in the predictor distribution.

\paragraph{Evaluation Protocol.} We use leave-one-environment-out evaluation:
\begin{enumerate}
    \item Hold out one environment as the test set.
    \item From the remaining 5 training environments, randomly select $n_{\text{labeled}} \in \{3, 4, 5\}$ environments to have labeled data (both $X$ and $Y$).
    \item The remaining training environments have only unlabeled data ($X$ only).
    \item Train models using labeled data from the selected environments; for \texttt{MIR}, additionally use unlabeled data from all training environments to estimate the regularization matrix.
    \item Evaluate on the held-out test environment using RMSE.
    \item Repeat steps 2--5 for 20 random selections of labeled environments.
\end{enumerate}
When $n_{\text{labeled}} = 5$, all training environments have labels, so no repeated trials are needed.

\subsection{Methods and Hyperparameters}

We compare the following methods:
\begin{itemize}
    \item \texttt{MIR}: Our Mean-based Invariant Regularization (Section~\ref{sec:MIR}).
    \item \texttt{MIR-Oracle}: \texttt{MIR} with the robustness parameter selected using the test environment (upper bound on \texttt{MIR} performance).
    \item \texttt{Pooled-Ridge}: Ridge regression pooling data across all labeled training environments.
    \item \texttt{AnchorReg}: Anchor regression \citep{rothenhausler2021anchor}.
    \item \texttt{GroupDRO}: Group distributionally robust optimization \citep{sagawa2020distributionally}.
\end{itemize}

\paragraph{Hyperparameter Grids.} We search over the following values:
\begin{itemize}
    \item \texttt{MIR} / \texttt{MIR-Oracle}: Robustness parameter $\gamma \in \{10^{-2}, 10^{-1}, 10^0, 10^1, 10^2, 10^3, 10^4, 10^5\}$.
    \item \texttt{Pooled-Ridge}: Ridge parameter $\alpha \in \{10^{-2}, 10^{-1}, 10^0, 10^1, 10^2, 10^3, 10^4, 10^5\}$.
    \item \texttt{AnchorReg}: Robustness parameter $\gamma \in \{10^{-2}, 10^{-1}, 10^0, 10^1, 10^2, 10^3, 10^4, 10^5\}$.
    \item \texttt{GroupDRO}: $\eta \in \{10^{-2}, 10^{-1}, 10^0, 10^1, 10^2, 10^3\}$.
\end{itemize}

\paragraph{Hyperparameter Selection.} For all methods except \texttt{MIR-Oracle}, we select hyperparameters using leave-one-environment-out cross-validation over the labeled training environments:
\begin{enumerate}
    \item For each hyperparameter value, perform leave-one-environment-out cross-validation over the $n_{\text{labeled}}$ labeled training environments.
    \item Compute the average MSE across folds.
    \item Select the hyperparameter with the lowest average MSE.
    \item Train the final model on all labeled training environments using the selected hyperparameter.
\end{enumerate}
For \texttt{MIR-Oracle}, the robustness parameter is selected based on performance on the held-out test environment.

\section{VitalDB: Further experimental details}
\label{app:vitaldb}

We use the VitalDB dataset \citep{lee2022vitaldb}, a collection of physiological time series recorded from surgical patients. The dataset contains vital sign measurements and hemodynamic parameters recorded during surgery.

\subsection{Data Description}

For each subject, we have a time series of arterial pressure waveform (APW) and stroke volume pairs. Each observation represents 8 seconds of APW signal and the corresponding average stroke volume over that interval. Different subjects have different total numbers of time steps, with an average of 362 time steps per subject.

\paragraph{Predictors.} For each subject, we extract embeddings from the APW signals using the pretrained CNN model described in Section~\ref{sec:vital_db}. Each 8-second APW segment is mapped to an embedding in $\mathbb{R}^{140}$.

\paragraph{Outcomes.} For each subject, the outcomes are stroke volume measurements, each corresponding to the average stroke volume over the corresponding 8-second interval.

\paragraph{Subject-centering.} We center both predictors and outcomes within each subject:
\begin{align*}
    X_i &\leftarrow X_i - \bar{X}_i, \\
    Y_i &\leftarrow Y_i - \bar{Y}_i,
\end{align*}
where $\bar{X}_i$ and $\bar{Y}_i$ denote the within-subject means computed over time steps. 

\subsection{Experimental Design}

\paragraph{Environment Definition.} Each subject defines one environment. With 128 subjects in total, we have 128 environments $e_1, \ldots, e_{128}$.

\paragraph{Evaluation Protocol.} We use leave-one-subject-out evaluation:
\begin{enumerate}
    \item Hold out one subject as the test environment.
    \item From the remaining 127 training subjects, randomly select $n_{\text{labeled}} \in \{20, 50, 80, 127\}$ subjects to have labeled data (both $X$ and $Y$).
    \item The remaining training subjects have only unlabeled data ($X$ only).
    \item Train models using labeled data from the selected subjects; for \texttt{VIR}, additionally use unlabeled data from all training subjects to estimate the regularization matrix.
    \item Evaluate on the held-out test subject.
    \item Repeat steps 2--5 for 20 random selections of labeled subjects.
\end{enumerate}
When $n_{\text{labeled}} = 127$, all training subjects have labels, so standard leave-one-subject-out cross-validation is performed without repeated trials.

\subsection{Methods and Hyperparameters}

We compare the following methods:
\begin{itemize}
    \item \texttt{VIR}: Our Variance-based Invariant Regularization (Section~\ref{sec:VIR}).
    \item \texttt{VIR-Oracle}: \texttt{VIR} with the robustness parameter selected using the test subject (upper bound on \texttt{VIR} performance).
    \item \texttt{Pooled-Ridge}: Ridge regression pooling data across all labeled training subjects.
    \item \texttt{GroupDRO}: Group distributionally robust optimization \citep{sagawa2020distributionally}.
    \item \texttt{DRIG}: Distributional Robustness via Invariant Gradients \citep{shen2025causality}.
\end{itemize}

\paragraph{Hyperparameter Grids.} We search over the following values:
\begin{itemize}
    \item \texttt{VIR} / \texttt{VIR-Oracle}: Robustness parameter $\gamma \in \{10^2, 10^3, 10^4, 10^5\}$.
    \item \texttt{Pooled-Ridge}: Ridge parameter $\alpha \in \{10^2, 10^3, \ldots, 10^{10}\}$.
    \item \texttt{DRIG}: Robustness parameter $\gamma \in \{10^{-2}, 10^{-1}, 10^0, 10^1, 10^2, 10^3, 10^4\}$.
    \item \texttt{GroupDRO}: $\eta \in \{10^{-2}, 10^{-1}, 10^0, 10^1, 10^2, 10^3\}$.
\end{itemize}

\paragraph{Hyperparameter Selection.} For all methods except \texttt{VIR-Oracle}, we select hyperparameters using leave-one-subject-out cross-validation over the labeled training subjects:
\begin{enumerate}
    \item For each hyperparameter configuration, perform leave-one-subject-out cross-validation over all $n_{\text{labeled}}$ labeled training subjects.
    \item Compute the average MSE across folds.
    \item Select the hyperparameter configuration with the lowest average MSE.
    \item Train the final model on all labeled training subjects using the selected hyperparameters.
\end{enumerate}
For \texttt{VIR-Oracle}, the robustness parameter is selected based on performance on the held-out test subject.

\paragraph{Prediction Post-Processing.} For all methods, we apply moving average smoothing with a window size of 4 minutes to both predictions and ground truth outcomes.

\end{document}